%% file: main.tex
\documentclass{article}
\PassOptionsToPackage{table}{xcolor}
\usepackage{main,times}

\input{math_commands.tex}

\usepackage{times}
\usepackage[T1]{fontenc}
\usepackage[utf8]{inputenc}
\usepackage{amsmath,amssymb,amsthm}
\usepackage{booktabs}
\usepackage{tabularx}
\usepackage{microtype}
\usepackage{graphicx}
\usepackage{float}
\usepackage{placeins}
\usepackage[table]{xcolor}
\usepackage{tikz}
\usetikzlibrary{arrows.meta,positioning}
\usepackage{wrapfig}
\usepackage{subcaption}
\usepackage{caption}
\usepackage[colorlinks]{hyperref}
\usepackage{etoc}
\hypersetup{citecolor=[HTML]{3366CC}} 
\usepackage{url}
\usepackage{thmtools}

\usepackage{algorithm}
\usepackage{algpseudocode}
\usepackage[T1]{fontenc}
\usepackage{times,booktabs,graphicx}
\definecolor{knowledgeblue}{HTML}{527DB8}
\definecolor{modelred}{HTML}{C96F69}

\algrenewcommand{\algorithmicrequire}{\textbf{Input:}}
\newcommand{\method}{\textsc{SPLICE}}

\definecolor{KnowledgeBlue}{HTML}{315B8A}
\definecolor{ModelGreen}{HTML}{4F7A5B}

\usepackage[most]{tcolorbox}

\definecolor{propbg}{HTML}{F4F7FA}
\definecolor{propframe}{HTML}{A5B6C8}

\newtcolorbox{propositionbox}{
    enhanced,
    breakable,
    colback=propbg,
    colframe=propframe,
    boxrule=0.5pt,
    arc=1mm,
    boxsep=0pt,
    left=3mm,
    right=3mm,
    top=2mm,
    bottom=2mm,
    before skip=6pt,
    after skip=6pt,
}

\AtBeginDocument{%
  \setlength{\abovedisplayskip}{6pt plus 2pt minus 2pt}%
  \setlength{\belowdisplayskip}{6pt plus 2pt minus 2pt}%
  \setlength{\abovedisplayshortskip}{2pt plus 1pt}%
  \setlength{\belowdisplayshortskip}{3pt plus 1pt minus 1pt}}

\title{Agentic Knowledge-Model Co-Evolution via Consistency Regularization}

\author{Sirui Chen\textsuperscript{1,*}, Zhining Liu\textsuperscript{1}, Chao Zhang\textsuperscript{2}, Jingrui He\textsuperscript{1}, Zheng Li\textsuperscript{3,\dag} \\
\textsuperscript{1}University of Illinois Urbana-Champaign \quad \textsuperscript{2}Georgia Institute of Technology \quad \textsuperscript{3}Amazon \\
\texttt{sirui6@illinois.edu}
}

\iclrfinalcopy
\begin{document}

\maketitle
\lhead{Preprint}
{\renewcommand{\thefootnote}{\fnsymbol{footnote}}%
 \footnotetext[1]{This work was done during an internship at Amazon.}%
 \footnotetext[2]{Corresponding author.}}
\etocdepthtag.toc{mtchapter}

\input{Sections/abstract}

\input{Sections/intro}
\input{Sections/prelim}
\input{Sections/method}
\input{Sections/exps}

\begingroup
\setlength{\parskip}{3pt}
\input{Sections/related_work}
\input{Sections/conclusion}
\endgroup
\newpage
\bibliography{ref}
\bibliographystyle{main}
\newpage
\appendix
\etocdepthtag.toc{mtappendix}
\etocsettagdepth{mtchapter}{none}
\etocsettagdepth{mtappendix}{subsection}
\renewcommand{\contentsname}{Appendix}
{\hypersetup{linkcolor=black}\tableofcontents}
\clearpage
\input{Sections/appendix}

\end{document}

%% file: math_commands.tex
\usepackage{amsmath,amsfonts,bm}

\def\eqref#1{equation~\ref{#1}}

\def\1{\bm{1}}

\DeclareMathAlphabet{\mathsfit}{\encodingdefault}{\sfdefault}{m}{sl}
\SetMathAlphabet{\mathsfit}{bold}{\encodingdefault}{\sfdefault}{bx}{n}

%% file: Sections/abstract.tex
\begin{abstract}
LLM agents can evolve by learning from accumulated experience. Existing approaches extract reusable external knowledge, such as skills and memories, from experience to guide subsequent task solving, with some further internalizing this knowledge through model training. However, different ways of analyzing and transforming the same experience can yield knowledge with substantially different utility, even when the resulting guidance appears reasonable. Inspired by consistency regularization, we identify consistent utility gains across different ways of analyzing the same experience, rather than relying on any single way of analyzing it. We introduce \method{}, a framework for consistency-regularized knowledge-model co-evolution. It constructs evolution views by perturbing how the same experience is analyzed, producing different knowledge candidates. Knowledge updates are then guided by common gain, which accounts for utility gains and their consistency across views. During model evolution, self-distillation internalizes the evolved knowledge alongside reinforcement learning, using knowledge associated with consistent utility degradation as a contrastive reference. The two processes interleave, with model updates driving further knowledge evolution. Experiments across diverse domains show that \method{} outperforms knowledge evolution and model training baselines by up to 7.1 and 4.6 points in average accuracy, while its evolved knowledge reduces output tokens per correct solution by up to 44\% and transfers across model sizes and families.
\end{abstract}

\begin{figure}[H]
    \centering
    \includegraphics[width=0.85\textwidth]{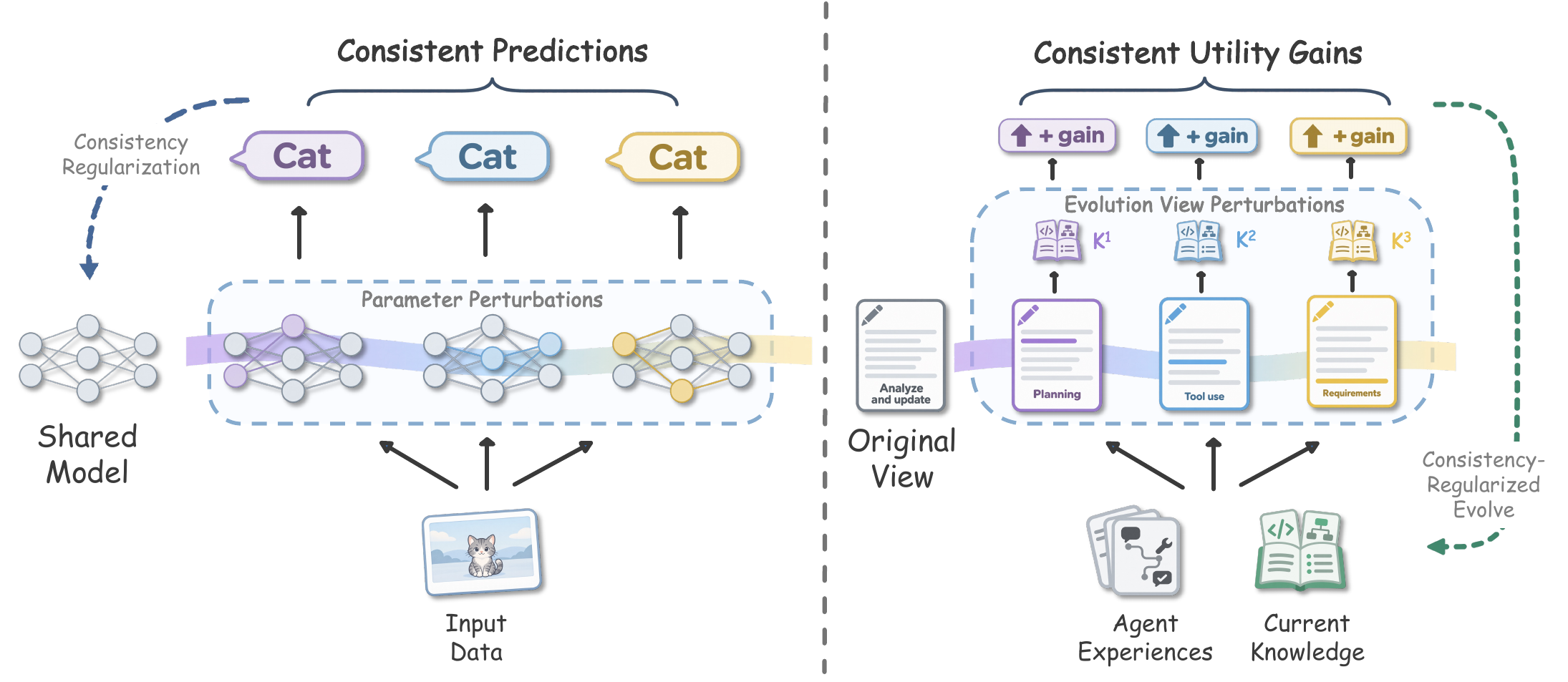}
    \caption{
        \textbf{From prediction consistency to utility consistency.}
        \textbf{Left:} Consistency regularization encourages consistent
        predictions under parameter perturbations.
        \textbf{Right:} \method{} guides knowledge updates using utility
        gains that persist across different evolution views of the same
        experience.
    }
    \label{fig:teaser}
    \vspace{-6pt}
\end{figure}

%% file: Sections/intro.tex
\section{Introduction}
\label{sec:intro}
Large Language Model (LLM) agents accumulate experience whenever they reason, use tools, and interact with the environment. For this experience to support continual improvement, an agent must identify useful guidance and reuse it in future tasks. This motivates knowledge evolution, through which agents extract and revise external knowledge, such as skills and memories, from their experience \citep{shinn2023reflexion,zhang2026agentic,ouyang2026reasoningbank,yang2026skillopt}. Such knowledge can also be internalized into model parameters through policy training \citep{lu2026self, lu2026skill0,he2026siri}. The updated model then generates new experience for further knowledge evolution, allowing knowledge and the model to co-evolve \citep{xia2026skillrl,he2026reskill}. External knowledge thus serves as a \emph{medium for agent evolution}, connecting accumulated experience with subsequent task solving and model learning.

However, knowledge derived from the same experience can differ substantially depending on which evidence is selected and how it is interpreted. For example, analyzing a purchase of the wrong size can yield guidance to include size requirements in queries or verify the selected size before purchase. On a product page that lists the requested size but defaults to another option, the first rule leaves the incorrect selection unchanged, while the second corrects it. Consequently, different ways of analyzing and transforming the same experience can yield knowledge with substantially different utility. Recent studies similarly find that even seemingly reasonable criteria for analyzing and transforming experience can yield less useful knowledge \citep{huang2026raw}, and that plausible knowledge can even underperform using no knowledge at all \citep{li2026skillsbench,dong2026agent}. This raises a central question: \emph{how can experience be effectively transformed into generalizable knowledge to support agentic evolution?}

To tackle this problem, we draw inspiration from consistency regularization (Figure~\ref{fig:teaser}). This approach encourages consistent predictions for the same input under random parameter perturbations~\citep{ni2024pace,wu2021r}. This reduces prediction sensitivity to such perturbations and can improve generalization. 
Similarly, we view the transformation of experience into knowledge as a function parameterized by textual instructions, whose perturbations define different evolution views. Here, consistency concerns utility: when different views produce distinct knowledge yet yield consistent positive gains on the same tasks, this suggests that the benefits are not specific to a single evolution view. Rather than relying on any single analysis, we therefore use utility consistency across evolution views to guide knowledge evolution toward more generalizable knowledge.

We introduce \method{}, a framework for consistency-regularized knowledge-model co-evolution. During knowledge evolution, it perturbs how the same experience is analyzed and transformed, creating \emph{evolution views} that each produce a knowledge candidate. Each candidate is evaluated through task replay, and a \emph{common gain} is calculated by combining utility gain with cross-view consistency. A knowledge optimizer then updates the current knowledge using the candidates and replay trajectories from tasks with the highest common gains. During model evolution, \method{} combines GRPO~\citep{shao2024deepseekmath} with self-distillation to internalize the evolved knowledge. The same replay also reveals guidance associated with consistent utility degradation, from which we construct noisy knowledge as a contrastive reference to highlight useful guidance in the evolved knowledge. A token-level gate controls its contribution to retain shared useful guidance. The two processes are interleaved, with the updated model generating new experience for further co-evolution.
We evaluate \method{} on four benchmarks covering math, coding, office work,
and web shopping at two model scales. Across these tasks, \method{}
consistently outperforms knowledge evolution and model training baselines,
improving average accuracy over the strongest baseline by 3.0 points for
Qwen3.5-4B and 4.6 points for Qwen3.5-9B. Notably, its evolved knowledge makes
task solving more efficient, requiring up to 44\% fewer output tokens per
correct solution than the strongest baseline, and delivers larger gains when transferred to other
model sizes and families.

%% file: Sections/prelim.tex
\section{Preliminaries}
\label{sec:prelim}
\paragraph{Knowledge-model co-evolution.}
We consider an LLM agent parameterized by $\theta$ and equipped with external knowledge $K$, including reusable skills, tools, memories, or environment information. Given a task $x\sim\mathcal{D}$, the agent generates a trajectory $\tau$ using $\pi_\theta(\cdot\mid x,\tau_{<t},K)$, where $\tau_{<t}$ contains preceding model outputs and environment feedback. The task, trajectory, and feedback, including the reward $R(x,\tau)$, form the experience $\mathcal{E}$. Knowledge evolution updates $K$ from experience $\mathcal{E}$, yielding $\widetilde K=\mathcal{G}(\mathcal{E},K;p)$, where $\mathcal{G}$ is a knowledge optimizer parameterized by textual instructions $p$. This transformation typically involves analyzing trajectories and feedback, proposing reusable guidance, and organizing and reviewing knowledge updates. The instructions $p$ specify which aspects of the experience are examined and how the resulting guidance is incorporated into $K$. 

Model evolution then internalizes knowledge-guided behaviors into $\theta$ through training. The updated model generates new experience and can change the utility of $K$, measured by the model's expected task reward with that knowledge, informing subsequent knowledge evolution. Co-evolution interleaves these two processes to maximize the expected return
of the final policy under the evolved knowledge,
$J_\theta=\mathbb{E}_{x\sim\mathcal{D},\,\tau\sim\pi_\theta(\cdot\mid x,K)}[R(x,\tau)]$.
\paragraph{Group Relative Policy Optimization.}
\label{sec:grpo}
Group Relative Policy Optimization (GRPO)
\citep{shao2024deepseekmath}
optimizes the policy using group-relative advantages. For each task $x$, it samples $G$ trajectories $\{\tau_g\}_{g=1}^{G}$ from the behavior policy $\pi_{\text{old}}(\cdot\mid x)$ and minimizes
\begin{equation}
\begin{aligned}
    \mathcal{L}_{\mathrm{GRPO}}(\theta)
    ={}&
    -\mathbb{E}\left[
        \frac{1}{G}\sum_{g=1}^{G}
        \frac{1}{|\tau_g|}\sum_{t=1}^{|\tau_g|}
        \min\left(
            \rho_{g,t}A_g,\,
            \operatorname{clip}
            (\rho_{g,t},1-\varepsilon,1+\varepsilon)A_g
        \right)
    \right]
    +\beta\mathcal{D}_{\mathrm{KL}}(\theta),
\end{aligned}
\label{eq:prelim-grpo-loss}
\end{equation}
where the expectation is over tasks and sampled trajectory groups. Here, $\rho_{g,t}$ is the token probability ratio between $\pi_\theta$ and $\pi_{\text{old}}$, $A_g$ is the group-normalized advantage of trajectory $\tau_g$, $\varepsilon$ is the clipping threshold, and $\beta$ weights the KL penalty toward a frozen reference policy.

%% file: Sections/method.tex
\section{\method{}: Knowledge-Model Co-Evolution}
\label{sec:method}
We introduce \method{}, a framework for consistency-regularized
knowledge-model co-evolution, illustrated in
Figure~\ref{fig:overview}.
Knowledge evolution (\S~\ref{sec:knowledge-evolution}) uses
utility consistency across evolution views to guide updates
to external knowledge.
Model evolution (\S~\ref{sec:internalization}) internalizes
the evolved knowledge through GRPO and gated contrastive
self-distillation.
The two processes are interleaved, with the updated model
generating new experience for further knowledge evolution.
\begin{figure}[t]
    \centering
    \includegraphics[width=0.92\linewidth]{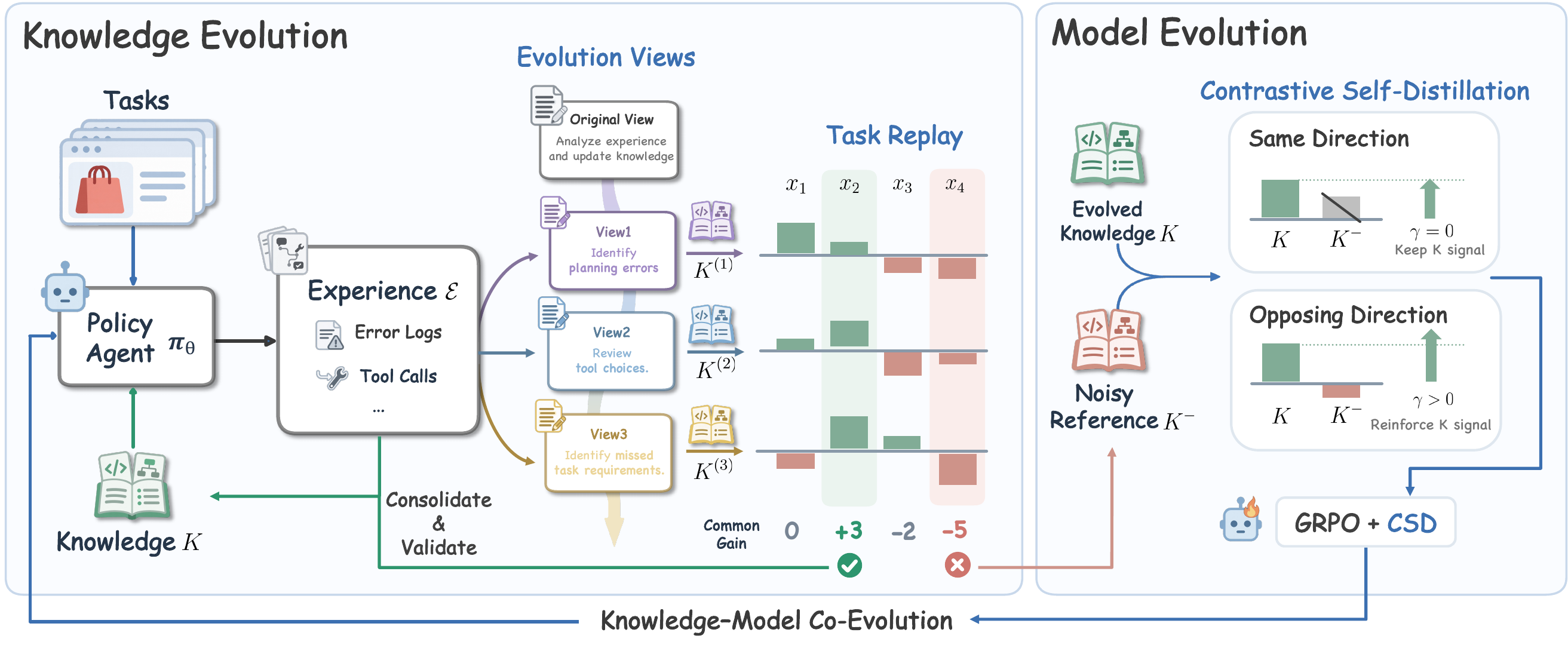}
    \caption{
        \textbf{Overview of \method{}.}
        \textbf{Left:} Knowledge evolution perturbs how the same
        experience is analyzed and transformed, using common gain
        across evolution views to select task experience for
        knowledge updates.
        \textbf{Right:} Model evolution combines GRPO with gated
        contrastive self-distillation to internalize the evolved
        knowledge, using noisy knowledge as a contrastive reference.
    }
    \label{fig:overview}
    \vspace{-4pt}
\end{figure}
\subsection{Consistency-Regularized Knowledge Evolution}
\label{sec:knowledge-evolution}

\paragraph{Knowledge consistency evaluation.}
At each round, the LLM agent $\pi_\theta$ solves a batch of tasks $\{x_i\}_{i=1}^{B}$ with the current knowledge $K$ to collect experience $\mathcal{E}$. We then ask the knowledge optimizer $\mathcal{G}$ to generate $N$ textual revisions of $p$. These revisions can emphasize different aspects of a failure, such as planning errors, ineffective tool use, or missed task requirements. Each revision defines a distinct \emph{evolution view}, which is applied to the same $\mathcal{E}$ and $K$ to produce a knowledge candidate:
\begin{equation}
    K^{(j)}
    =
    \mathcal{G}(\mathcal{E},K;p\oplus\delta_j),
    \qquad j=1,\ldots,N.
    \label{eq:knowledge-views}
\end{equation}
Here, $\delta_j$ denotes a textual revision, and $\oplus$ incorporates it into $p$.
To assess the utility of each knowledge candidate $K^{(j)}$, we replay the tasks used to collect $\mathcal{E}$ with that candidate, enabling direct utility comparisons against the current knowledge $K$.
For task $x_i$, we define $\Delta_{ij}$ as the mean reward gain under $K^{(j)}$ relative to the current knowledge $K$ across replay runs.
A high observed gain from a single candidate can reflect useful knowledge or a particularly successful execution. We therefore select task experience supported by consistent utility gains across evolution views, rather than choosing a candidate solely by its average gain across tasks. For each task $x_i$, we compute the \emph{common gain}, which combines the mean utility gain with an inconsistency penalty:
\begin{equation}
\begin{aligned}
    \widehat C_i
    &=
    \underbrace{\frac{1}{N}\sum_{j=1}^{N}\Delta_{ij}}_{\text{mean gain}}
    -
    \underbrace{
        \frac{1}{2\binom{N}{2}}
        \sum_{j<\ell}|\Delta_{ij}-\Delta_{i\ell}|
    }_{\text{inconsistency penalty}}
    =
    \frac{1}{\binom{N}{2}}
    \sum_{j<\ell}\min(\Delta_{ij},\Delta_{i\ell}).
    \label{eq:common-gain}
\end{aligned}
\end{equation}
The pairwise-minimum form measures the gain shared by two evolution views. Averaging over pairs favors tasks with gains consistently supported across views, while discounting isolated high gains that can reflect an occasional success or guidance specific to a particular view. We select up to $n$ tasks with the highest positive common gains ($\widehat C_i>0$). Their trajectories provide evidence of improvements supported across evolution views, forming the basis for the subsequent knowledge update. The following proposition further connects population common gain to the probability of obtaining positive utility gains under a new evolution view.
\begin{propositionbox}
\begin{restatable}[Improvement under a new evolution view]
    {proposition}{viewstability}
\label{prop:view-stability}
Fix a task $x_i$, experience $\mathcal{E}$, model $\theta$, and current knowledge $K$. Let $\Delta$ and $\Delta'$ denote the utility gains from two independent, identically distributed evolution views, measured by the expected reward gain relative to $K$. Assume $\mathbb{E}|\Delta|<\infty$, and define the population mean gain $\mu=\mathbb{E}[\Delta]$ and common gain $C=\mathbb{E}[\min(\Delta,\Delta')]$. If $C>0$, then $\mu>0$ and
\begin{equation}
    \Pr(\Delta>0)
    \geq \frac{C}{\mu}
    =
    1-\frac{\mathbb{E}|\Delta-\Delta'|}{2\mu}.
    \label{eq:view-stability-bound}
\end{equation}
\end{restatable}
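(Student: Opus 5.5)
The plan is to establish three facts in turn: $\mu\ge C$, the pointwise bound $\min(\Delta,\Delta')\le \mathbf{1}\{\Delta>0\}\,\Delta'$, and the identity $\min(a,b)=\tfrac{a+b}{2}-\tfrac{|a-b|}{2}$ for the right-hand equality. All integrability comes from $\mathbb{E}|\Delta|<\infty$. This gives $\mathbb{E}|\min(\Delta,\Delta')|\le \mathbb{E}|\Delta|+\mathbb{E}|\Delta'|<\infty$ and $\mathbb{E}|\Delta-\Delta'|\le 2\mathbb{E}|\Delta|<\infty$, so every expectation below is finite and linearity applies.

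\emph{Step 1: $\mu>0$.} Pointwise we have $\min(\Delta,\Delta')\le\Delta$. Taking expectations gives $C\le\mathbb{E}[\Delta]=\mu$. Hence $C>0$ forces $\mu\ge C>0$, and the ratio $C/\mu$ is well defined and lies in $(0,1]$.

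\emph{Step 2: the probability bound.} This is the main step. The key pointwise inequality is
\begin{equation*}
    \min(\Delta,\Delta')\le \mathbf{1}\{\Delta>0\}\,\Delta'.
\end{equation*}
I would check it by cases.
\begin{itemize}
\item If $\Delta>0$, the left side is $\min(\Delta,\Delta')\le\Delta'$, which equals the right side.
\item If $\Delta\le 0$, the left side is at most $\Delta\le 0$, which equals the right side.
\end{itemize}
Next I take expectations. Because $\Delta$ and $\Delta'$ are independent, $\mathbf{1}\{\Delta>0\}$ is independent of $\Delta'$, which is integrable. This gives
\begin{equation*}
    C\le \Pr(\Delta>0)\,\mathbb{E}[\Delta']=\Pr(\Delta>0)\,\mu,
\end{equation*}
where the last equality uses that $\Delta$ and $\Delta'$ are identically distributed. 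Dividing by $\mu>0$ from Step 1 yields $\Pr(\Delta>0)\ge C/\mu$.

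This is the only place where independence is used, and it is the step I regard as the crux. The obvious alternative is Markov's inequality applied to $\Delta^+$. That route gives $\mu\le\mathbb{E}[\Delta^+]$, which does not by itself produce a lower bound on $\Pr(\Delta>0)$. The indicator-times-independent-copy trick avoids that dead end.

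\emph{Step 3: the equality.} I apply $\min(a,b)=\tfrac{a+b}{2}-\tfrac{|a-b|}{2}$ and take expectations, using that $\Delta$ and $\Delta'$ have the same mean. This gives
\begin{equation*}
    C=\mu-\tfrac12\mathbb{E}|\Delta-\Delta'|.
\end{equation*}
Dividing by $\mu$ gives $C/\mu=1-\mathbb{E}|\Delta-\Delta'|/(2\mu)$, which completes the proof. This identity is the population analogue of the pairwise identity in the definition of the common gain $\widehat C_i$. It also shows that the bound weakens as the cross-view inconsistency $\mathbb{E}|\Delta-\Delta'|$ grows relative to the mean gain.
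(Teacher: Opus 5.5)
Your proof is correct and follows essentially the same route as the paper. The paper uses the pointwise bound $\min(\Delta,\Delta')-a\le(\Delta-a)\mathbf{1}\{\Delta'>a\}$ for a general threshold $0\le a<C$; at $a=0$ this is your inequality with the roles of $\Delta$ and $\Delta'$ swapped, and the paper then factors the expectation by independence and finishes with the same identity $\min(u,v)=(u+v-|u-v|)/2$.
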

\end{propositionbox}
The proof is deferred to Appendix~\ref{app:view-stability-proof}.
The bound depends on two quantities: the mean utility gain $\mu$ and
the utility inconsistency $\frac{1}{2}\mathbb{E}|\Delta-\Delta'|$.
The mean gain measures the average benefit of the generated knowledge,
while inconsistency measures how much this benefit changes across
evolution views. At the same positive mean gain, lower inconsistency gives a stronger lower bound on the probability
of improvement under a new view. We further discuss the properties of common-gain selection in detail in Appendix~\ref{app:additional-common-gain-analysis}.

\paragraph{Knowledge evolution from common gain.}
Let $\mathcal S$ denote the selected tasks, $\mathcal{E}_{\mathcal S}\subseteq\mathcal{E}$ their original experience under $K$, and $\mathcal{E}_{\mathcal S}^{(j)}$ their replay experience under $K^{(j)}$. The knowledge optimizer $\mathcal{G}$ compares these trajectories to relate changes in knowledge to changes in task behavior, and identifies updates supported by consistent utility gains across evolution views, rather than simply extracting content shared by the candidates. It consolidates these updates into $K$ to produce
\begin{equation}
    \widetilde K=\mathcal{G}\big(\mathcal{E}_{\mathcal S},\{K^{(j)},\mathcal{E}_{\mathcal S}^{(j)}\}_{j=1}^{N},K;p'\big),
    \label{eq:knowledge-update}
\end{equation}
where $p'$ denotes the instructions for knowledge update. We evaluate $\widetilde K$ against $K$ on a held-out validation set and accept it only if validation performance improves; otherwise, $K$ remains unchanged. Subsequent rounds repeat this process with new experience under the current model and knowledge.

\subsection{Knowledge-Guided Model Evolution}
\label{sec:internalization}

A natural way to internalize the evolved knowledge $K$ is self-distillation~\citep{zhao2026self}, using the model conditioned on $K$ as a teacher. Our knowledge evolution process can also reveal patterns associated with consistent utility degradation. Prior work shows that learning from negative samples can provide useful supervision by discouraging incorrect responses~\citep{zhu2026surprising}. Motivated by these findings, we construct noisy knowledge from these degradation patterns and use it as a contrastive reference to complement the guidance from $K$ during self-distillation.

\paragraph{Contrastive knowledge guidance.}
To construct this reference, we follow the task-selection procedure in \S~\ref{sec:knowledge-evolution}, replacing $\Delta_{ij}$ with $-\Delta_{ij}$ when computing common gain. This selects tasks with utility degradation consistently supported across evolution views. For the selected tasks, the knowledge optimizer $\mathcal{G}$ similarly extracts and consolidates guidance associated with utility degradation into noisy knowledge $K^-$ for contrastive self-distillation. We construct $K^-$ anew whenever model evolution starts training with $K$, rather than evolving $K^-$ across knowledge evolution rounds.
\paragraph{Gated knowledge internalization.}
Knowledge that improves overall task performance can fail to benefit some tasks, while knowledge associated with degradation can still contain useful guidance. We therefore weight self-distillation by the observed task-level benefit of $K$ and gate the contribution of $K^-$ at the token level.
At the start of each training batch, we collect trajectories with and without $K$ for each task $x_i$ using the current policy $\pi_\theta$. Let $\overline R_i^{K}$ and $\overline R_i^{\varnothing}$ denote the mean rewards of the two rollout groups. We define the task-level weight $w_i=\max(\overline R_i^{K}-\overline R_i^{\varnothing},0)$, so only tasks with positive observed gains contribute to contrastive self-distillation, with larger gains receiving greater weight.
At the token level, we gate the contribution of $K^-$ to preserve useful guidance shared with $K$. For each token $y_t$ in a trajectory sampled without knowledge, we compare the model's predictions under $K$, $K^-$ and no knowledge on the same history. Let
$\ell_t(K')=\log\pi_{\theta}(y_t\mid x_i,\tau_{<t},K')$
and
$\ell_t=\log\pi_{\theta}(y_t\mid x_i,\tau_{<t})$
denote its log-probabilities with and without knowledge.
We define the token-level gate $\gamma_t$ and training signal $a_t$ as
\begin{equation}
\begin{aligned}
\gamma_t
&=
\operatorname{clip}\!\left(
-[\ell_t(K)-\ell_t][\ell_t(K^-)-\ell_t],
0,1
\right), \\
a_t
&=
\operatorname{sg}\!\left[
\sigma\!\left(
\ell_t(K)-\ell_t
-\gamma_t[\ell_t(K^-)-\ell_t]
\right)
\right],
\end{aligned}
\label{eq:gated-guidance}
\end{equation}
where $\sigma$ is a bounded, sign-preserving activation
(e.g., $\tanh$), and $\operatorname{sg}$ stops gradients.
When $K$ and $K^-$ change the token probability in the same direction, $\gamma_t=0$ and supervision comes from $K$ alone. When their effects oppose each other, $\gamma_t$ controls how strongly $K^-$ reinforces the signal from $K$. The activation $\sigma$ limits the magnitude of this signal while preserving its sign.
Combining the task-level weight with the token-level signal gives
\begin{equation}
\begin{aligned}
\mathcal{L}_{\mathrm{CSD}}
&=
-\mathbb{E}\!\left[
w_i a_t
\log\pi_\theta(y_t\mid x_i,\tau_{<t})
\right], \\
\mathcal{L}_{\mathrm{SPLICE}}
&=
\mathcal{L}_{\mathrm{GRPO}}
+\lambda\mathcal{L}_{\mathrm{CSD}}.
\end{aligned}
\label{eq:joint-loss}
\end{equation}
Here, $\lambda$ balances the two losses: $\mathcal{L}_{\mathrm{GRPO}}$ optimizes task reward on rollouts with and without $K$, and $\mathcal{L}_{\mathrm{CSD}}$ internalizes $K$ weighted by its observed benefit.

\paragraph{Knowledge-model co-evolution in \method{}.}
Co-evolution jointly updates the agent's external knowledge $K$ and model parameters $\theta$. Knowledge evolution proposes updates to $K$ through Eq.~\ref{eq:knowledge-update}, and model evolution uses the adopted knowledge to update $\theta$ through Eq.~\ref{eq:joint-loss}. The updated model generates new trajectories and feedback that provide experience for subsequent knowledge evolution. Since knowledge utility depends on the task model, proposed knowledge is evaluated with respect to the current model as training proceeds. We detail the co-evolution procedures in Appendix~\ref{app:co-evolution}.

%% file: Sections/exps.tex
\section{Experiments}

\subsection{Experimental Setup}
\label{sec:experimental-setup}

\paragraph{Benchmarks and baselines.}
We evaluate \method{} on four benchmarks from diverse domains:
LiveMathBench (LiveMath)~\citep{he2026livemathematicianbench} for
mathematical reasoning, LiveCodeBench (LiveCode)~\citep{jain2025livecodebench}
for code generation, SpreadsheetBench
(SpreadSheet)~\citep{ma2024spreadsheetbench} for spreadsheet manipulation,
and WebShop~\citep{yao2022webshop} for interactive web shopping, and report
accuracy (Acc.) on the test splits. We compare \method{} with two groups of
baselines. Training-free knowledge evolution methods, including
ReasoningBank~\citep{ouyang2026reasoningbank}, ACE~\citep{zhang2026agentic},
EvoSkill~\citep{alzubi2026evoskill}, and SkillOpt~\citep{yang2026skillopt},
evolve external knowledge with the task model fixed. Training-based methods
include GRPO~\citep{shao2024deepseekmath}, which optimizes the policy with
reinforcement learning, and SkillRL~\citep{xia2026skillrl} and
ReSkill~\citep{he2026reskill}, which co-evolve skills with the policy. We
also report the base model without knowledge. Details of the benchmarks and
baselines are provided in Appendix~\ref{app:implementation-details}.

\paragraph{Models and implementation details.}
We use Qwen3.5-4B and Qwen3.5-9B~\citep{qwen3.5} as task models. We evaluate
two variants of \method{}: \method{} (Know. Evo), which evolves knowledge with
the task model fixed, and \method{} (Co-Evo), which co-evolves knowledge and
the model with the asynchronous procedure in
Appendix~\ref{app:asynchronous-co-evolution}. The knowledge optimizer is an
external LLM, and each knowledge evolution round uses three evolution views.
Further implementation details and hyperparameters are provided in
Appendix~\ref{app:implementation-details}.

\begin{table*}[t]
\centering
\caption{\textbf{Main results (Acc., \%; $\uparrow$).} For each model size,
the upper group contains training-free methods, whose results are averaged
over three seeds, and the lower group contains methods that involve model
training. Avg. is computed over all four benchmarks, and $\Delta$ shows
improvement in Avg. over Base. \textbf{Bold} and \underline{underline} mark
the best and second-best results for each model size.}
\label{tab:main-results}
\small
\setlength{\tabcolsep}{7pt}
\renewcommand{\arraystretch}{1.05}
\definecolor{mainOurs}{RGB}{241,246,251}
\definecolor{mainDelta}{HTML}{6B6B6B}
\newcommand{\mainna}{\textemdash{}}
\begin{tabularx}{\textwidth}{Xccccc>{\color{mainDelta}}c}
\toprule
\textbf{Method} & \textbf{LiveMath} & \textbf{LiveCode} & \textbf{SpreadSheet} & \textbf{WebShop} & \textbf{Avg.} & $\boldsymbol{\Delta}$ \\
\midrule
\rowcolor{black!5}
\multicolumn{7}{l}{\rule{0pt}{2.4ex}\textbf{Qwen3.5-4B}} \\
Base & 27.59 & 38.89 & 11.43 & 3.60 & 20.38 & \mainna \\
ReasoningBank & 37.97 & 34.31 & 12.26 & 18.93 & 25.87 & +5.5 \\
ACE & 48.01 & 31.81 & 16.31 & 27.67 & 30.95 & +10.6 \\
EvoSkill & 45.70 & 40.83 & 14.17 & 44.33 & 36.26 & +15.9 \\
SkillOpt & 47.46 & 39.31 & 17.38 & 42.60 & 36.69 & +16.3 \\
\rowcolor{mainOurs}
\textbf{SPLICE} (Know. Evo) & 50.33 & 47.08 & 17.38 & 58.20 & 43.25 & +22.9 \\
\midrule
GRPO & 51.99 & \underline{55.00} & 21.43 & 61.00 & 47.35 & +27.0 \\
SkillRL & 60.60 & 48.75 & 20.00 & 67.00 & 49.09 & +28.7 \\
ReSkill & \underline{61.59} & 52.50 & \textbf{28.93} & \underline{67.40} & \underline{52.60} & \underline{+32.2} \\
\rowcolor{mainOurs}
\textbf{SPLICE} (Co-Evo) & \textbf{67.55} & \textbf{59.58} & \underline{25.00} & \textbf{70.20} & \textbf{55.58} & \textbf{+35.2} \\
\midrule
\rowcolor{black!5}
\multicolumn{7}{l}{\rule{0pt}{2.4ex}\textbf{Qwen3.5-9B}} \\
Base & 31.02 & 50.14 & 22.26 & 9.07 & 28.12 & \mainna \\
ReasoningBank & 39.85 & 47.36 & 28.81 & 22.93 & 34.74 & +6.6 \\
ACE & 50.77 & 51.94 & 28.33 & 31.93 & 40.75 & +12.6 \\
EvoSkill & 48.23 & 46.25 & 29.64 & 48.27 & 43.10 & +15.0 \\
SkillOpt & 49.23 & 51.11 & 26.07 & 48.53 & 43.74 & +15.6 \\
\rowcolor{mainOurs}
\textbf{SPLICE} (Know. Evo) & 58.06 & 55.56 & 29.88 & 59.67 & 50.79 & +22.7 \\
\midrule
GRPO & 67.22 & \underline{62.50} & 27.86 & \underline{69.00} & 56.64 & +28.5 \\
SkillRL & 63.58 & 59.17 & 31.43 & 63.20 & 54.34 & +26.2 \\
ReSkill & \underline{70.20} & 62.08 & \underline{34.64} & 62.60 & \underline{57.38} & \underline{+29.3} \\
\rowcolor{mainOurs}
\textbf{SPLICE} (Co-Evo) & \textbf{71.52} & \textbf{64.17} & \textbf{38.21} & \textbf{74.00} & \textbf{61.98} & \textbf{+33.9} \\
\bottomrule
\end{tabularx}
\par\vspace{-4pt}
\begin{minipage}{\textwidth}
\footnotesize
\end{minipage}

\end{table*}

\subsection{Main Results}
\label{sec:main-results}

\paragraph{Knowledge evolution.}
With the task model fixed, \method{} (Know. Evo) matches or outperforms all
knowledge evolution baselines on every benchmark. It improves Avg. over the
strongest baseline, SkillOpt, by 6.6 and 7.1 points with Qwen3.5-4B and
Qwen3.5-9B, with the largest gains on WebShop (13.9 and 11.1 points), where
tasks require long-horizon interaction with the environment.

\paragraph{Knowledge-model co-evolution.}
\method{} (Co-Evo) achieves the highest Avg. among methods that update model
parameters, surpassing the strongest baseline, ReSkill, by 3.0 and 4.6 points
with Qwen3.5-4B and Qwen3.5-9B. Notably, skill-augmented baselines do not
always improve over GRPO, suggesting that evolved skills with inconsistent
utility can interfere with policy learning on complex tasks. \method{}
mitigates this issue by internalizing only knowledge selected by utility
consistency across evolution views. Together, \method{} couples
consistency-regularized knowledge evolution with model evolution, allowing the
agent to benefit from useful external guidance while continually improving
its own policy.

\input{Sections/analysis}

%% file: Sections/analysis.tex
\begingroup
\raggedbottom
\setlength{\intextsep}{4pt}
\setlength{\parskip}{3pt}
\FloatBarrier

\subsection{Analysis}
\label{sec:analysis}

\paragraph{Effect of consistency regularization.}
The central claim of \method{} is that utility gains consistent across
evolution views provide stronger evidence for updating knowledge than gains
observed under a single view. We examine this claim by tracking the
proposed updates over 50 evolution rounds on WebShop, comparing \method{}
(Know. Evo) with the knowledge evolution baselines.

\input{Tables/analysis-scoring}
\noindent As shown in
Figure~\ref{fig:consistency-val-risk} (left), \method{} reaches 53\% and
49\% validation Acc. by round 10 with Qwen3.5-4B and Qwen3.5-9B and remains
stable afterwards, whereas the baselines fluctuate at lower levels.
Figure~\ref{fig:consistency-val-risk} (right) reports the fraction of
rounds in which the proposed update lowers validation Acc. by more than a
given margin. Only 6\% and 2\% of the updates proposed by \method{} cause
drops larger than 5 points with Qwen3.5-4B and Qwen3.5-9B, compared with up
to 46\% and 56\% for the baselines. Consistency across evolution views thus makes
updates less likely to degrade the current knowledge, in line
with Proposition~\ref{prop:view-stability}.

\par

\begin{figure}[t]
    \centering
    \vspace{-6pt}
    \includegraphics[width=0.88\linewidth]{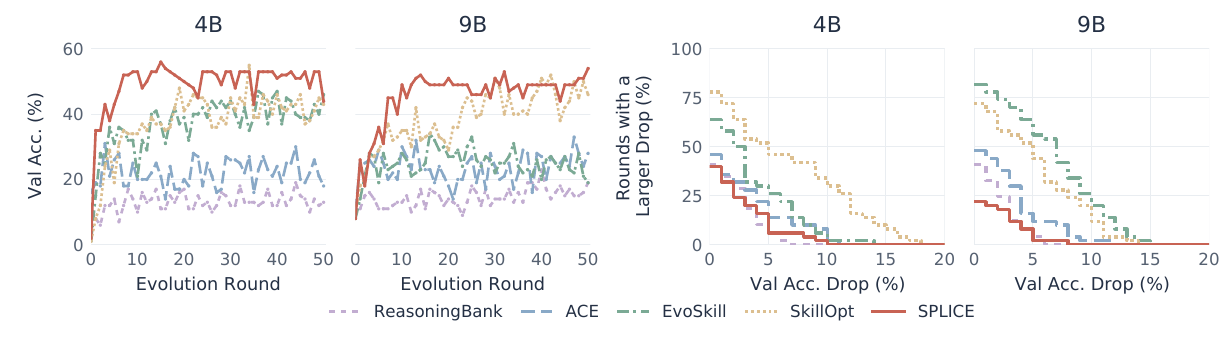}
    \caption{\textbf{Validation performance and degradation risk on WebShop.}
        Left: validation Acc. of the proposed update across evolution
        rounds, before acceptance. Right: percentage of rounds in which
        the proposed update causes an absolute validation Acc. drop
        larger than a given threshold (in points).}
    \label{fig:consistency-val-risk}
    \vspace{-4pt}
\end{figure}

\begin{wrapfigure}{r}{0.47\textwidth}
    \vspace{-6pt}
    \centering
    \captionsetup{font=small,skip=4pt}
    \includegraphics[width=\linewidth]{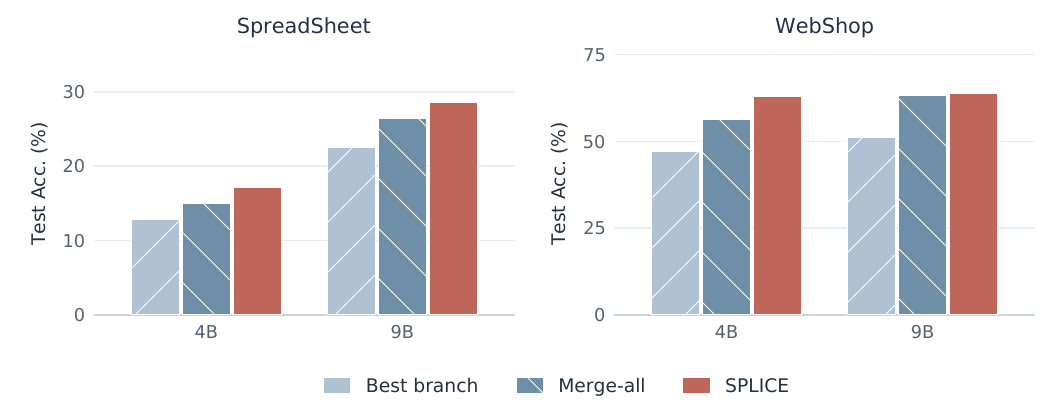}
    \caption{\textbf{Different strategies of using multiple evolution views on
        SpreadSheet and WebShop.} Best branch adopts the candidate with the
    highest mean replay reward, and Merge-all updates knowledge with the
    evidence from all tasks. \method{} (Know. Evo) achieves the best performance in all settings.}
    \label{fig:consolidation-comparison}
    \par\vspace{4pt}
    \input{Tables/analysis-efficiency}
\end{wrapfigure}

As an ablation, Table~\ref{tab:scoring-control} replaces common gain with
mean gain when selecting task evidence. With mean gain, Test Acc. drops by
13.8 and 6.1 points with Qwen3.5-4B and Qwen3.5-9B, and validation Acc. is
maintained or improved in only 29\% of rounds. The consistency term is thus essential to
the effectiveness of \method{}. Appendix~\ref{app:additional-results} further analyzes how consistency
affects the stability of individual updates and their effects at the task
level.

\paragraph{Beyond scaling computation.}
\method{} spends additional computation on generating multiple evolution
views and replaying their candidates. To examine whether its gains come
merely from this additional computation, we compare it with two variants
that also generate multiple evolution views but use them differently: Best
branch adopts the candidate with the highest mean replay reward, and
Merge-all updates knowledge with the evidence from all tasks without
selection by common gain. As shown in
Figure~\ref{fig:consolidation-comparison}, \method{} achieves the highest
Test Acc. for both model sizes on SpreadSheet and WebShop. Best branch
performs worst, suggesting that relying on a single view discards useful
evidence, while the evidence merged by Merge-all can be redundant or
distracting. The gains of \method{} thus stem from how common gain uses the
evolution views, not from the additional computation alone.
\par

\setlength{\parskip}{3pt}
\setlength{\textfloatsep}{10pt plus 2pt minus 2pt}
\paragraph{Efficiency.}
Although \method{} draws on multiple knowledge candidates in each round,
we expect its knowledge to be compact but effective, because it retains
only updates supported by consistent utility gains across evolution views.
To examine whether this holds, we measure the final knowledge length of
\method{} (Know. Evo) and the knowledge evolution baselines, together with
the average output tokens and interaction steps over all WebShop test attempts,
including failures. As shown in Figure~\ref{fig:knowledge-efficiency}, \method{}
produces knowledge of 4.98k and 6.07k tokens for Qwen3.5-4B and Qwen3.5-9B,
respectively, compared with ACE's 41.51k and 64.99k. Its knowledge is also
shorter than EvoSkill's but longer than SkillOpt's. \method{} requires the
fewest interaction steps (8.33 and 8.64), suggesting that task-solving efficiency depends
on the retained guidance as well as knowledge length. ACE produces shorter outputs with
Qwen3.5-4B, but with much longer knowledge and more interaction steps. Table~\ref{tab:tokens-per-correct} further
reports the output tokens per correct attempt, which measures the cost of
obtaining a correct solution. \method{} has the lowest value on both
WebShop and LiveCode for both model sizes, reducing it by up to 44\%
relative to the strongest baseline, indicating that it improves
accuracy while reducing the cost of obtaining correct solutions.

\begin{figure}[t]
    \centering
    \captionsetup{skip=4pt}
    \includegraphics[width=0.80\linewidth]{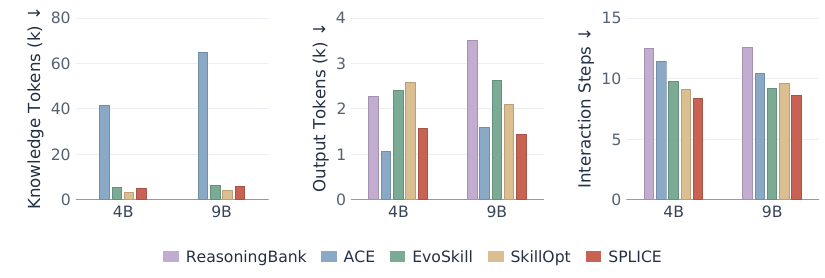}
    \caption{\textbf{Knowledge length, output tokens, and interaction steps on WebShop.}
    Results of \method{} (Know. Evo) and knowledge evolution baselines with
    Qwen3.5-4B and Qwen3.5-9B. ReasoningBank uses a dynamic knowledge bank, so its knowledge length is omitted.}
    \label{fig:knowledge-efficiency}
    \vspace{-4pt}
\end{figure}

\paragraph{Knowledge generalization.}
Just as consistency regularization improves model generalization, we
expect utility consistency to yield knowledge that generalizes across
task models. To examine whether
this holds, we take the final
knowledge evolved with Qwen3.5-9B by \method{} (Know. Evo) and the
knowledge evolution baselines, and apply it without further evolution to
task models of other sizes (Qwen3.5-4B and Qwen3.5-27B) and other
families (Gemma-4-12B-IT~\citep{gemmateam2026gemma4} and Muse-Glimmer-30B~\citep{museglimmer}). We measure absolute Test Acc.
gains (in points) over each target model without knowledge on LiveMath and
SpreadSheet. As shown in
Figure~\ref{fig:generalization}, \method{} yields the largest mean gain in all settings. On LiveMath, its advantage
over the baselines is particularly large for Qwen3.5-4B and Gemma-4-12B-IT:
the gains are 31.13 and 33.86 points, respectively, compared with at most
17.38 and 12.42 points for the baselines. Notably, the gains of \method{} on LiveMath remain at a similar level
across all four target models, whereas those of the baselines drop
substantially on Qwen3.5-4B and Gemma-4-12B-IT, indicating that the
knowledge evolved by \method{} is less sensitive to which model uses it.

\begin{figure}[t]
    \centering
    \captionsetup{skip=4pt}
    \includegraphics[width=0.80\linewidth]{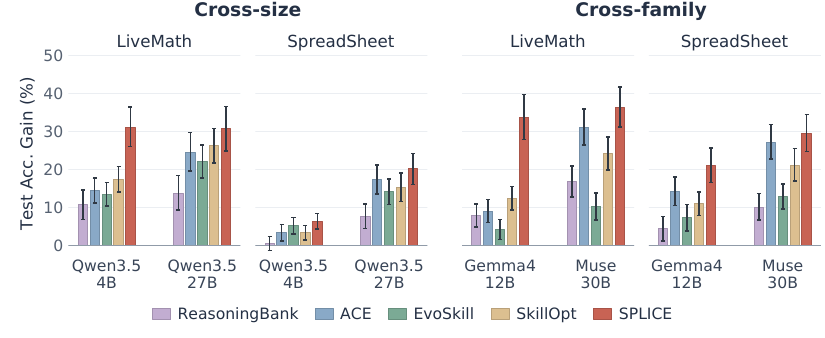}
    \caption{\textbf{Generalization across model sizes and families.}
    Knowledge evolved with Qwen3.5-9B is applied to other task models
    without further evolution. Bars show absolute gains in avg@8 Test Acc.
    (in points) over the same model without knowledge; error bars show paired 95\%
    confidence intervals.}
    \label{fig:generalization}
    \vspace{-4pt}
\end{figure}
 
\begin{figure}[t]
    \centering
    \begin{minipage}[c]{0.65\linewidth}
        \centering
        \includegraphics[width=\linewidth]{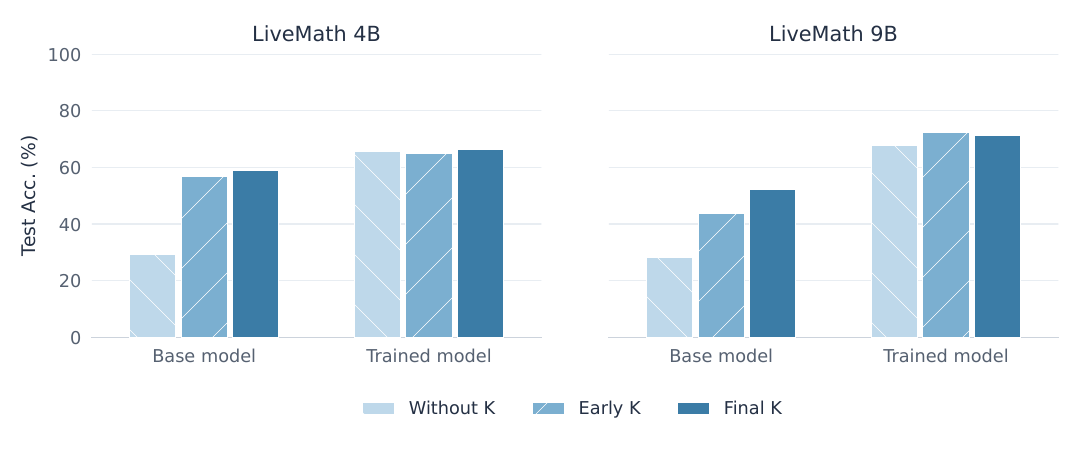}
    \end{minipage}\hfill
    \begin{minipage}[c]{0.33\linewidth}
        \captionsetup{font=small,skip=0pt}
        \caption{\textbf{Model and knowledge contributions on LiveMath.}
            Test Acc. of base and trained models evaluated without knowledge,
            with the knowledge at the start of training (Early $K$), and with
            the final knowledge (Final $K$).}
        \label{fig:training-model-knowledge}
    \end{minipage}
    \par\vspace{-8pt} 
\end{figure}

\paragraph{Co-evolution dynamics.}
Co-evolution trains the model with the evolved knowledge, which raises two
questions: whether the model internalizes the knowledge, and how the utility
of knowledge changes as the model evolves.
Figure~\ref{fig:training-model-knowledge} evaluates base and trained models on
LiveMath with knowledge from different stages. For the first question, the
trained models without knowledge already outperform the base models equipped
with the final knowledge for both model sizes, reaching 67.88\% versus
52.32\% with Qwen3.5-9B, indicating that model evolution internalizes and extends the evolved knowledge. For the second
question, knowledge brings much smaller gains to the trained models than to
the base models, and the early knowledge no longer improves the trained 4B
model. The training dynamics in Appendix~\ref{app:additional-results} show
the same trend during training: the benefit of the current knowledge shrinks
as the model improves without it. Validating knowledge with the current model
before using it for training therefore keeps model evolution from
internalizing knowledge that no longer helps.

\FloatBarrier
\endgroup

%% file: Tables/analysis-scoring.tex
\begin{wraptable}{r}{0.52\textwidth}
\vspace{-5pt}
\centering
\captionsetup{font=small,skip=4pt}
\caption{\textbf{Common gain versus mean gain on WebShop.} Mean gain
scores task evidence by the average utility gain across evolution views.
Gain$\geq$0 is the percentage of rounds in which
validation Acc. is not degraded.}
\label{tab:scoring-control}
\small
\setlength{\tabcolsep}{2.5pt}
\renewcommand{\arraystretch}{1.05}
\begin{tabular}{@{}llcc@{}}
\toprule
\textbf{Model} & \textbf{Metric} & \textbf{Mean gain} & \textbf{Common gain} \\
\midrule
Qwen3.5-4B & Test Acc. (\%) & 44.40 & \textbf{58.20} \\
           & Gain$\geq$0 (\%) & 28.57 & \textbf{67.86} \\
\midrule
Qwen3.5-9B & Test Acc. (\%) & 53.60 & \textbf{59.67} \\
           & Gain$\geq$0 (\%) & 28.57 & \textbf{69.05} \\
\bottomrule
\end{tabular}
\end{wraptable}

%% file: Tables/analysis-efficiency.tex
\begingroup
\centering
\captionsetup{font=small,skip=4pt}
\captionof{table}{\textbf{Output tokens per correct attempt} (k,~$\downarrow$) on
test tasks, i.e., total output tokens divided by the number of correct
attempts. Bold marks the lowest.}
\label{tab:tokens-per-correct}
\small
\setlength{\tabcolsep}{3pt}
\renewcommand{\arraystretch}{1.05}
\begin{tabular}{@{}lcccc@{}}
\toprule
 & \multicolumn{2}{c}{\textbf{WebShop}} & \multicolumn{2}{c}{\textbf{LiveCode}} \\
\cmidrule(lr){2-3}\cmidrule(l){4-5}
\textbf{Method} & 4B & 9B & 4B & 9B \\
\midrule
ReasoningBank & 11.99 & 15.25 & 24.86 & 14.85 \\
ACE & 3.81 & 4.99 & 27.47 & 15.27 \\
EvoSkill & 5.42 & 5.46 & 22.17 & 15.07 \\
SkillOpt & 6.04 & 4.31 & 25.17 & 14.79 \\
\method{} (Know. Evo)& \textbf{2.69} & \textbf{2.40} & \textbf{19.82} & \textbf{13.64} \\
\bottomrule
\end{tabular}
\endgroup

%% file: Sections/related_work.tex
\section{Related Work}
\label{sec:related-work}

\paragraph{Knowledge evolution for LLM agents.}
LLM agents improve without parameter updates by turning experience into external knowledge, from textual reflections \citep{shinn2023reflexion,zhao2024expel} to structured memories, playbooks, and skills that are continually updated \citep{ouyang2026reasoningbank,zhang2026agentic,alzubi2026evoskill,yang2026skillopt,zhang2026memskill}. As such knowledge accumulates, recent studies find that it does not reliably improve task performance \citep{li2026skillsbench,huang2026raw,dong2026agent} and that seemingly correct experience can mislead future tasks \citep{xiong2026memory}. Some methods therefore retain only updates with positive estimated gain \citep{tang2026memory}, evaluating each candidate on its own. Rather than relying on any single analysis, \method{} uses utility consistency across evolution views to guide knowledge evolution.

\paragraph{Knowledge internalization and co-evolution.}
Knowledge can also be internalized into model parameters through reinforcement learning and self-distillation \citep{lu2026self,lu2026skill0,he2026siri,lin2026skillc}, or evolved together with the policy \citep{xia2026skillrl,he2026reskill,wu2025evolver,shi2026skill1}. Meanwhile, on-policy self-distillation, in which the model conditioned on privileged context serves as its own teacher \citep{zhao2026self}, has been refined with token-level modulation \citep{li2026demopsd} and contrastive references \citep{pan2026rlcsd}. \method{} constructs its contrastive reference from knowledge that consistently degrades utility during knowledge evolution, and gates it at the token level to retain guidance shared with the evolved knowledge.

\paragraph{Consistency regularization.}
Consistency regularization improves generalization by encouraging consistent
predictions for the same input under perturbations. It was first widely
adopted in semi-supervised learning
\citep{bachman2014learning,sajjadi2016regularization,tarvainen2017mean,xie2020unsupervised}
and has since been extended to supervised training and fine-tuning of large
models \citep{wu2021r,ni2024pace}. Inspired by this idea, \method{} applies
consistency to the utility of evolved knowledge. Additional related work is discussed in Appendix~\ref{app:more-related-work}.

%% file: Sections/conclusion.tex
\section{Conclusion}
\label{sec:conclusion}

We introduce \method{}, a framework for consistency-regularized knowledge-model co-evolution. \method{} updates knowledge based on utility gains that are consistent across evolution views and internalizes the evolved knowledge through reinforcement learning and gated contrastive self-distillation. Across four benchmarks and two model scales, \method{} outperforms knowledge evolution and model training baselines, while its evolved knowledge enables more efficient task solving and transfers across model sizes and families. Overall, our results suggest that utility gains consistent across different ways of analyzing experience, rather than those observed under any single analysis, provide an effective signal for turning experience into generalizable knowledge that, when internalized through model evolution, supports long-term agentic evolution.

%% file: Sections/appendix.tex
\section{Implementation Details}
\label{app:implementation-details}

\input{Sections/appendix_implementation}

\clearpage
\section{Knowledge-Model Co-Evolution}
\label{app:co-evolution}

\input{Sections/appendix_coevolution}

\clearpage
\section{More Related Work}
\label{app:more-related-work}

\input{Sections/appendix_related}

\clearpage
\section{Additional Results}
\label{app:additional-results}

\input{Sections/appendix_results}

\clearpage
\section{Proof of Proposition~\ref{prop:view-stability}}
\label{app:view-stability-proof}

\begin{propositionbox}
\viewstability*
\end{propositionbox}
\begin{proof}
All expectations and probabilities below are over evolution views,
with the task, experience, model, and current knowledge fixed.
Recall that
\[
    \mu=\mathbb{E}[\Delta],
    \qquad
    C=\mathbb{E}[\min(\Delta,\Delta')].
\]
The assumption $\mathbb{E}|\Delta|<\infty$ and identical distributions
ensure that $C$ is finite, since
$|\min(\Delta,\Delta')|\leq|\Delta|+|\Delta'|$.
Furthermore, $\min(\Delta,\Delta')\leq\Delta$ implies $C\leq\mu$.
Thus, $C>0$ ensures $\mu>0$.

We establish a more general bound for any threshold $0\leq a<C$.
The following pointwise inequality holds:
\begin{equation}
    \min(\Delta,\Delta')-a
    \leq
    (\Delta-a)\mathbf{1}\{\Delta'>a\}.
    \label{eq:app-pointwise-bound}
\end{equation}
If $\Delta'\leq a$, the left-hand side is at most
$\Delta'-a\leq0$, while the right-hand side is zero.
If $\Delta'>a$, the right-hand side equals $\Delta-a$,
which is at least $\min(\Delta,\Delta')-a$.

Taking expectations and using independence gives
\begin{align}
    C-a
    &\leq
    \mathbb{E}\!\left[
        (\Delta-a)\mathbf{1}\{\Delta'>a\}
    \right] \\
    &=
    \mathbb{E}[\Delta-a]\Pr(\Delta'>a) \\
    &=
    (\mu-a)\Pr(\Delta>a),
\end{align}
where the last equality follows from the identical distributions
of $\Delta$ and $\Delta'$.
Since $\mu\geq C>a$, we obtain
\begin{equation}
    \Pr(\Delta>a)
    \geq
    \frac{C-a}{\mu-a},
    \qquad
    \Pr(\Delta\leq a)
    \leq
    \frac{\mu-C}{\mu-a}.
    \label{eq:app-threshold-bound}
\end{equation}
Setting $a=0$ yields
\[
    \Pr(\Delta>0)\geq\frac{C}{\mu}.
\]

Finally, using $\min(u,v)=(u+v-|u-v|)/2$, we have
\[
    C
    =
    \frac{
        \mathbb{E}[\Delta]+\mathbb{E}[\Delta']
        -\mathbb{E}|\Delta-\Delta'|
    }{2}
    =
    \mu-\frac12\mathbb{E}|\Delta-\Delta'|.
\]
Therefore,
\[
    \Pr(\Delta>0)
    \geq
    \frac{C}{\mu}
    =
    1-\frac{\mathbb{E}|\Delta-\Delta'|}{2\mu},
\]
which completes the proof.
\end{proof}
\newpage
\section{Discussion of Common-Gain Selection}
\label{app:additional-common-gain-analysis}
\subsection{Illustrative Comparison of Selection Strategies}
\label{app:structured-view-specific-gains}

We use a simplified model to examine how different
selection strategies respond to utility gains that are
specific to particular evolution views.
All strategies use the same knowledge optimizer and the
same budget: each selects the evidence of one task and
constructs one knowledge update from it.
This comparison isolates the selection of task evidence
from how the knowledge optimizer consolidates updates.

Consider two replayed tasks, $R$ and $S$.
The knowledge updates constructed from their evidence
have expected utility gains $g_R>g_S>0$ on held-out
tasks, relative to the current knowledge $K$.
Across $N\geq2$ independently sampled evolution views,
their replay gains satisfy
\begin{equation}
    \Delta_{Rj}=g_R,
    \qquad
    \Delta_{Sj}=g_S+aZ_j,
    \qquad
    Z_j\overset{\mathrm{iid}}{\sim}
    \operatorname{Bernoulli}(q),
    \quad j=1,\ldots,N,
    \label{eq:app-structured-source-gains}
\end{equation}
where $0<q<1$.
The term $aZ_j$ represents a view-specific gain that
appears on the replayed task under only some evolution
views.
We assume that this gain does not transfer to held-out
tasks.
Let $d=g_R-g_S$ and assume $a>d>0$, so that an
individual view can rank $S$ above $R$ despite its
lower held-out utility.
To isolate this effect, we treat the replay gains as
exact expected reward differences, without replay
estimation noise.

Single-view selection ranks the tasks by
$\Delta_{i1}$.
Mean-gain selection ranks them by
$\frac{1}{N}\sum_{j=1}^{N}\Delta_{ij}$.
Common-gain selection uses Eq.~\ref{eq:common-gain}.
After selection, the same knowledge optimizer constructs
the update from the evidence of the selected task.
For definiteness, ties are resolved in favor of $R$.

Let $H=\sum_{j=1}^{N}Z_j$ denote the number of views
under which $S$ receives the additional gain.
Then $H\sim\operatorname{Binomial}(N,q)$, and the
mean-gain and common-gain scores of $S$ are
\begin{equation}
\begin{aligned}
    s_S^{\mathrm{mean}}
    &=g_S+a\frac{H}{N},\\
    s_S^{\mathrm{common}}
    &=g_S+a\frac{H(H-1)}{N(N-1)}.
\end{aligned}
\label{eq:app-source-score-aggregation}
\end{equation}
The second expression follows because only pairs in
which both views receive the additional gain contribute
$a$ to the pairwise minimum.
Task $R$ has score $g_R$ under all selection strategies.

\begin{propositionbox}
\begin{restatable}[Selection under view-specific gains]
    {proposition}{structuredviewcomparison}
\label{prop:structured-view-comparison}
Under the model above, let $U_{\mathrm{single}}$,
$U_{\mathrm{mean}}$, and $U_{\mathrm{common}}$
denote the held-out utility of the selected update.
For every $N\geq2$,
\begin{equation}
    \mathbb{E}[U_{\mathrm{common}}]
    \geq
    \mathbb{E}[U_{\mathrm{mean}}].
    \label{eq:app-finite-selection-comparison}
\end{equation}
Furthermore, if
\begin{equation}
    aq^2<d<aq,
    \label{eq:app-selection-advantage-regime}
\end{equation}
then
\begin{equation}
\begin{aligned}
    \lim_{N\to\infty}
    \mathbb{E}[U_{\mathrm{common}}]
    &=g_R,\\
    \mathbb{E}[U_{\mathrm{single}}]
    &=g_R-dq,\\
    \lim_{N\to\infty}
    \mathbb{E}[U_{\mathrm{mean}}]
    &=g_S.
\end{aligned}
\label{eq:app-selection-utilities}
\end{equation}
Thus, common-gain selection has strictly higher
expected held-out utility than both alternatives
for sufficiently large $N$.
\end{restatable}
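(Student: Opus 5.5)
Each strategy picks either $R$ or $S$, and its held-out utility is then $g_R$ or $g_S$. So $\mathbb{E}[U]=g_R-d\Pr(\text{$S$ selected})$, and the whole proposition reduces to comparing the probabilities that $S$ is selected. Ties go to $R$, so $S$ is selected exactly when its score strictly exceeds $g_R$. By Eq.~\ref{eq:app-source-score-aggregation}:
\begin{itemize}
\item Single-view selection picks $S$ iff $aZ_1>d$, i.e.\ iff $Z_1=1$ (because $a>d>0$), which has probability $q$.
\item Mean-gain selection picks $S$ iff $aH/N>d$.
\item Common-gain selection picks $S$ iff $aH(H-1)/(N(N-1))>d$.
\end{itemize}

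\textbf{Finite-$N$ comparison.} Since $0\le H\le N$, we have $(H-1)/(N-1)\le 1$, so
\[
\frac{H(H-1)}{N(N-1)}\le \frac{H}{N}
\]
pointwise. Hence the event $\{aH(H-1)/(N(N-1))>d\}$ is contained in $\{aH/N>d\}$. This gives $\Pr(S\text{ selected by common})\le\Pr(S\text{ selected by mean})$, and therefore $\mathbb{E}[U_{\mathrm{common}}]\ge\mathbb{E}[U_{\mathrm{mean}}]$ for every $N\ge2$.

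\textbf{Single-view utility.} From the first case above, $\mathbb{E}[U_{\mathrm{single}}]=g_R-dq$ exactly, for every $N$.

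\textbf{Limits as $N\to\infty$.} By the strong law of large numbers, $H/N\to q$ almost surely. It follows that
\[
\frac{H(H-1)}{N(N-1)}=\frac{H}{N}\cdot\frac{H/N-1/N}{1-1/N}\to q^2 \quad\text{a.s.}
\]
\begin{itemize}
\item For mean gain, $d<aq$ gives $\mathbf 1\{aH/N>d\}\to 1$ almost surely. By bounded convergence, $\Pr(S\text{ selected})\to1$, so $\mathbb{E}[U_{\mathrm{mean}}]\to g_S$.
\item For common gain, $aq^2<d$ gives $\mathbf 1\{aH(H-1)/(N(N-1))>d\}\to0$ almost surely. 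So $\Pr(S\text{ selected})\to0$ and $\mathbb{E}[U_{\mathrm{common}}]\to g_R$.
\end{itemize}
Alternatively, Chebyshev's inequality applied to $H/N$ gives these limits with explicit rates.

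\textbf{Strict ordering for large $N$.} Because $0<q<1$ and $d>0$, we have $g_S<g_R-dq<g_R$. The limits are therefore strictly ordered, so for sufficiently large $N$ common-gain selection strictly beats both alternatives.

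The main obstacle is the boundary case where a limit equals the threshold, since a.s.\ convergence of the indicator fails there. The strict inequalities in Eq.~\ref{eq:app-selection-advantage-regime} are exactly what exclude this case, and the only care needed is to use them as strict.
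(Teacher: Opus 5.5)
Your proposal is correct and follows the paper's proof essentially step for step: you reduce everything to $\mathbb{E}[U]=g_R-d\Pr(S\text{ selected})$, use the pointwise inequality $H(H-1)/(N(N-1))\le H/N$ to show that common-gain selection of $S$ implies mean-gain selection of $S$, and apply the law of large numbers together with the strict inequalities $aq^2<d<aq$ to obtain the limits. Your explicit appeal to bounded convergence, which turns almost-sure convergence of the indicators into convergence of the selection probabilities, is a small rigor step that the paper leaves implicit.
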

\end{propositionbox}

\begin{proof}
Since $a>d$, single-view selection chooses $S$
exactly when $Z_1=1$, with probability $q$.
For the other two strategies,
Eq.~\ref{eq:app-source-score-aggregation} gives
\begin{equation}
\begin{aligned}
    \Pr(\text{mean selects }S)
    &=
    \Pr\left(\frac{H}{N}>\frac{d}{a}\right),\\
    \Pr(\text{common selects }S)
    &=
    \Pr\left(
        \frac{H(H-1)}{N(N-1)}>\frac{d}{a}
    \right).
\end{aligned}
\label{eq:app-selection-error-probabilities}
\end{equation}
For every $H\in\{0,\ldots,N\}$,
\[
    \frac{H(H-1)}{N(N-1)}
    \leq
    \frac{H}{N}.
\]
Therefore, whenever common-gain selection chooses $S$,
mean-gain selection also chooses $S$.
Selecting $S$ instead of $R$ reduces held-out utility
by $d>0$, proving
Eq.~\ref{eq:app-finite-selection-comparison}.

By the law of large numbers,
\[
    \frac{H}{N}\longrightarrow q,
    \qquad
    \frac{H(H-1)}{N(N-1)}
    \longrightarrow q^2
\]
almost surely.
Under Eq.~\ref{eq:app-selection-advantage-regime},
the probability that mean-gain selection chooses $S$
converges to one, while the corresponding probability
for common-gain selection converges to zero.
The expected held-out utility of any selection rule is
$g_R-d\Pr(\text{select }S)$.
Substitution yields
Eq.~\ref{eq:app-selection-utilities}.
Because $0<q<1$ and $d>0$,
$g_R>g_R-dq>g_S$, establishing the strict comparison
for sufficiently large $N$.
\end{proof}

This example distinguishes collecting more views from
requiring utility consistency across them.
As $N$ increases, mean-gain selection more accurately
estimates $g_S+aq$, which still includes the
view-specific gain.
Common gain instead approaches $g_S+aq^2$, reducing
the contribution of gains supported by only a fraction
of the views.
When $aq^2<d<aq$, this difference changes which
task evidence is selected for the knowledge update.

The advantage depends on both the transfer assumption
and the relative size of the view-specific gain.
If $d>aq$, both multi-view strategies select $R$ with
probability approaching one.
If $d<aq^2$, both select $S$ with probability approaching
one.
The analysis therefore illustrates a regime in which
consistency helps identify more transferable updates;
it does not establish universal superiority over
mean-gain selection or merging all evidence.

\subsection{Robustness to Shifts in Evolution Views}
\label{app:view-distribution-shift}

Proposition~\ref{prop:view-stability} considers a new
evolution view drawn from the same distribution.
We now allow the distribution of views to change,
capturing shifts in how experience is analyzed and
transformed into knowledge.

Fix a task $x_i$, experience $\mathcal{E}$, model
$\theta$, and current knowledge $K$.
Let $P$ denote the original distribution of evolution
views, and let $\Delta(v)$ be the expected reward gain
of the knowledge candidate produced under view $v$.
Define
\[
    \mu=\mathbb{E}_{V\sim P}[\Delta(V)],
    \qquad
    C=\mathbb{E}_{V,V'\overset{\mathrm{iid}}{\sim}P}
    \left[\min(\Delta(V),\Delta(V'))\right].
\]

\begin{propositionbox}
\begin{restatable}[Utility under a shift in evolution views]
    {proposition}{viewshiftrobustness}
\label{prop:view-shift-robustness}
Assume
$\mathbb{E}_{V\sim P}|\Delta(V)|<\infty$.
Let $Q$ be another distribution over evolution views,
with $Q\ll P$, and define its density ratio
$w(v)=\frac{dQ}{dP}(v)$.
Suppose that, for some $\rho\geq0$,
\begin{equation}
    \operatorname{ess\,sup}_{P} w
    -
    \operatorname{ess\,inf}_{P} w
    \leq \rho.
\label{eq:app-view-shift-condition}
\end{equation}
Then
\begin{equation}
    \mathbb{E}_{V\sim Q}[\Delta(V)]
    \geq
    \mu-\rho(\mu-C)
    =
    (1-\rho)\mu+\rho C.
\label{eq:app-view-shift-bound}
\end{equation}
In particular, if $0\leq\rho\leq1$, then
\begin{equation}
    \mathbb{E}_{V\sim Q}[\Delta(V)]\geq C.
\label{eq:app-view-shift-common-bound}
\end{equation}
Thus, when $0\leq\rho\leq1$, a positive common gain
guarantees positive expected utility gain under $Q$.
\end{restatable}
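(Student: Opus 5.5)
The plan is to express the shifted mean as a reweighted expectation under $P$ and then symmetrize over two independent views, so that the pairwise difference $\Delta(V)-\Delta(V')$ appears. That difference is controlled by the inconsistency $\mu-C$ from Proposition~\ref{prop:view-stability}.

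\textbf{Step 1 (integrability).} Write $m=\operatorname{ess\,inf}_P w$ and $M=\operatorname{ess\,sup}_P w$. Since $w\ge 0$ and $\mathbb{E}_P[w]=1$, we have $m\le 1\le M$. Then $M\le m+\rho\le 1+\rho$, so $w$ is $P$-essentially bounded. Hence $w\Delta$ is $P$-integrable, and
\[
\mathbb{E}_{V\sim Q}[\Delta(V)]=\mathbb{E}_P[w(V)\Delta(V)]
\]
is finite.

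\textbf{Step 2 (centering).} Because $\mathbb{E}_P[w-1]=0$, we have
\[
\mathbb{E}_Q[\Delta]-\mu=\mathbb{E}_P[(w(V)-1)\Delta(V)].
\]
Introduce an independent copy $V'\sim P$. By independence, $\mathbb{E}[(w(V)-1)\Delta(V')]=\mathbb{E}[w(V)-1]\,\mathbb{E}[\Delta(V')]=0$. Subtracting this zero term gives
\[
\mathbb{E}_Q[\Delta]-\mu=\mathbb{E}\big[(w(V)-1)(\Delta(V)-\Delta(V'))\big].
\]

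\textbf{Step 3 (symmetrization).} Swap the roles of $V$ and $V'$ and average the two expressions. This yields
\[
\mathbb{E}_Q[\Delta]-\mu=\tfrac12\,\mathbb{E}\big[(w(V)-w(V'))(\Delta(V)-\Delta(V'))\big].
\]
Almost surely, $|w(V)-w(V')|\le M-m\le\rho$. Therefore the integrand is at least $-\rho\,|\Delta(V)-\Delta(V')|$, and
\[
\mathbb{E}_Q[\Delta]-\mu\ge -\tfrac{\rho}{2}\,\mathbb{E}|\Delta(V)-\Delta(V')|.
\]

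\textbf{Step 4 (identify the inconsistency and conclude).} The identity $\min(u,v)=(u+v-|u-v|)/2$, used in the proof of Proposition~\ref{prop:view-stability}, gives $\tfrac12\mathbb{E}|\Delta(V)-\Delta(V')|=\mu-C$. Substituting this proves Eq.~\ref{eq:app-view-shift-bound}, namely $\mathbb{E}_Q[\Delta]\ge\mu-\rho(\mu-C)=(1-\rho)\mu+\rho C$.

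For $0\le\rho\le1$, the right-hand side is a convex combination of $\mu$ and $C$. Since $\min(\Delta,\Delta')\le\Delta$ implies $\mu\ge C$, this combination is at least $C$, which gives Eq.~\ref{eq:app-view-shift-common-bound}. The positivity claim then follows immediately when $C>0$.

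The main obstacle is finding the symmetrization in Step 3. A naive Hölder-type bound on $\mathbb{E}_P[(w-1)\Delta]$ would give a bound in terms of $\mathbb{E}|\Delta-\mu|$ rather than the pairwise quantity $\mathbb{E}|\Delta-\Delta'|$. Inserting the independent copy and using the zero-mean property of $w-1$ is what turns the oscillation bound $M-m\le\rho$ directly into the factor $\rho(\mu-C)$. Everything else is routine measure-theoretic bookkeeping, which Fubini handles given the integrability from Step 1.
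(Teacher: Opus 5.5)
Your proof is correct and follows essentially the same route as the paper: write $\mathbb{E}_Q[\Delta]=\mu+\operatorname{Cov}_P(\Delta,w)$, symmetrize the covariance over an independent copy $V'$, bound $|w(V)-w(V')|\le\rho$ almost surely, and identify $\tfrac12\mathbb{E}|\Delta-\Delta'|=\mu-C$. The only difference is that you derive the identity $\operatorname{Cov}_P(\Delta,w)=\tfrac12\mathbb{E}[(w(V)-w(V'))(\Delta(V)-\Delta(V'))]$ explicitly through the centering and swapping steps, whereas the paper cites it directly.
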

\end{propositionbox}

\begin{proof}
Since $w$ is a density ratio,
$\mathbb{E}_{P}[w]=1$.
Condition~\ref{eq:app-view-shift-condition} therefore
implies $0\leq w\leq1+\rho$ almost surely under $P$.
Consequently, $\Delta$ is integrable under $Q$, and
\[
    \mathbb{E}_{Q}[\Delta]
    =
    \mathbb{E}_{P}[w\Delta]
    =
    \mu+\operatorname{Cov}_{P}(\Delta,w).
\]

Let $V,V'$ be independent draws from $P$, and write
$\Delta=\Delta(V)$, $\Delta'=\Delta(V')$,
$w=w(V)$, and $w'=w(V')$.
The covariance identity gives
\[
    \operatorname{Cov}_{P}(\Delta,w)
    =
    \frac12
    \mathbb{E}\left[
        (\Delta-\Delta')(w-w')
    \right].
\]
Because $|w-w'|\leq\rho$ almost surely,
\[
    \operatorname{Cov}_{P}(\Delta,w)
    \geq
    -\frac{\rho}{2}
    \mathbb{E}|\Delta-\Delta'|.
\]
Using
$C=\mu-\frac12\mathbb{E}|\Delta-\Delta'|$,
we obtain
\[
    \mathbb{E}_{Q}[\Delta]
    \geq
    \mu-\rho(\mu-C),
\]
which proves Eq.~\ref{eq:app-view-shift-bound}.
Finally, $C\leq\mu$, so for $0\leq\rho\leq1$,
\[
    (1-\rho)\mu+\rho C\geq C.
\]
\end{proof}

The parameter $\rho$ controls how strongly the relative
frequencies of evolution views can change.
For example, $w(v)\in[1/2,3/2]$ satisfies
Eq.~\ref{eq:app-view-shift-condition} with $\rho=1$.
The bound shows that the possible reduction in expected
utility is controlled by $\mu-C$, the utility inconsistency.
At a fixed mean gain, lower inconsistency therefore
provides a stronger guarantee under a change in the
view distribution.

This result concerns the utility of knowledge evolution
on a fixed task. It allows bounded reweighting of views
supported by $P$; it does not directly establish transfer
to arbitrary new tasks or preservation of utility after
the knowledge optimizer consolidates updates.

\clearpage
\section{Case Studies}
\label{app:case-studies}

\input{Sections/appendix_cases}

%% file: Sections/appendix_implementation.tex
\subsection{Benchmarks}
\label{app:benchmarks}

Table~\ref{tab:appendix-data-splits} summarizes the data splits. Training tasks
provide experience for knowledge evolution and model training, validation
tasks are used to evaluate proposed knowledge updates, and test tasks are used
only for evaluation.

\begin{table}[ht]
    \centering
    \caption{\textbf{Data splits and training settings.} Counts refer to tasks,
    or shopping sessions for WebShop. Output tokens are the maximum number of
    tokens per model response.}
    \label{tab:appendix-data-splits}
    \small
    \setlength{\tabcolsep}{6pt}
    \begin{tabular}{lrrrrr}
        \toprule
        Benchmark & Train & Val. & Test & Output tokens & Epochs \\
        \midrule
        LiveMath & 86 & 43 & 302 & 16,384 & 3 \\
        LiveCode & 68 & 34 & 240 & 32,768 & 3 \\
        SpreadSheet & 80 & 40 & 280 & 4,096 & 3 \\
        WebShop & 1,000 & 100 & 500 & 512 & 1 \\
        \bottomrule
    \end{tabular}
\end{table}

\paragraph{LiveMath.}
We use LiveMathematicianBench~\citep{he2026livemathematicianbench} in its
multiple-choice setting without proof sketches, with problems from the
March--June 2026 subsets. The agent answers each problem by selecting one of
five options.

\paragraph{LiveCode.}
We use the code-generation subset of LiveCodeBench~\citep{jain2025livecodebench},
combining the v5 and v6 releases and splitting problems by contest so that no
contest is shared across splits. The agent writes a Python solution, which is
evaluated by the execution-based test runner of LiveCodeBench.

\paragraph{SpreadSheet.}
We use the Verified-400 subset of SpreadsheetBench~\citep{ma2024spreadsheetbench}.
Given an instruction and an input workbook, the agent writes Python code to
produce an output workbook and can revise the code using execution feedback.

\paragraph{WebShop.}
We use the official 1K-product environment with synthetic shopping
goals~\citep{yao2022webshop}. The agent navigates textual product pages through
search and click actions, including product-option selection and purchase.

\subsection{Models and Baselines}
\label{app:models-baselines}

\paragraph{Models.}
The task models are Qwen3.5-4B and Qwen3.5-9B~\citep{qwen3.5} with thinking
mode disabled, decoded with temperature 1.0, top-$p$ 0.95, and top-$k$ 20.
The knowledge optimizer is an external LLM.
Table~\ref{tab:appendix-hyperparameters} lists the hyperparameters of \method{}.

\begin{table}[ht]
    \centering
    \caption{\textbf{Hyperparameters of \method{}.}}
    \label{tab:appendix-hyperparameters}
    \small
    \setlength{\tabcolsep}{6pt}
    \begin{tabular}{llr}
        \toprule
        Component & Hyperparameter & Value \\
        \midrule
        Knowledge evolution & Evolution views $N$ & 3 \\
         & Replay runs per candidate & 1 \\
         & Selected tasks $n$ & $\{1, 2, 3\}$ \\
        \midrule
        Model evolution & Self-distillation weight $\lambda$ & $\{0.1, 0.3, 0.5, 1\}$ \\
         & Learning rate & $1\times10^{-6}$ \\
         & Training batch size & 8 \\
         & Group size & 16 (LiveMath), 8 (others) \\
        \midrule
        Co-evolution & Reward-gap thresholds $\epsilon_{\mathrm{sd}}=\epsilon_{\mathrm{end}}$ & 0.03 \\
         & Stage budgets $T_{\min}$, $T_{\max}$ & 3, 5 (WebShop: 10, 20) \\
         & Restart patience $L$ & 2 \\
         & Warmup rounds & 4 (WebShop, LiveMath), 8 (others) \\
        \bottomrule
    \end{tabular}
\end{table}

\paragraph{Baselines.}
Base uses the task model without knowledge.
ReasoningBank~\citep{ouyang2026reasoningbank} stores reasoning memories
distilled from experience and retrieves task-relevant entries.
ACE~\citep{zhang2026agentic} maintains an evolving playbook through reflection
and curation. EvoSkill~\citep{alzubi2026evoskill} builds skills from task
experience, and SkillOpt~\citep{yang2026skillopt} iteratively optimizes
external skills. GRPO~\citep{shao2024deepseekmath} trains the task model with
group-relative policy optimization, and SkillRL~\citep{xia2026skillrl} and
ReSkill~\citep{he2026reskill} combine skill evolution with policy
optimization. All training-based methods are trained for the number of epochs
listed in Table~\ref{tab:appendix-data-splits}.

\subsection{Evaluation Protocol}
\label{app:evaluation-protocol}

We report accuracy (Acc.), the percentage of test tasks that are solved. A task
is solved if the selected option is correct on LiveMath, all test cases pass on
LiveCode, the output workbook matches the reference in all cases on
SpreadSheet, or the purchase receives the full reward on WebShop. Results of
training-free methods are averaged over three seeds. \method{} (Co-Evo)
evaluates the final model with the latest knowledge that passes held-out
validation with this model, and without knowledge if no knowledge passes.

%% file: Sections/appendix_coevolution.tex
\subsection{Design Considerations}
\label{app:coevo-design}

Knowledge evolution and model evolution depend on each other. The utility
of knowledge depends on the model that uses it: guidance that helps the
current model can become redundant or even harmful after the model is
trained. Model evolution, in turn, internalizes knowledge through
self-distillation from the model conditioned on it, which requires the
knowledge to remain fixed. Co-evolution therefore has two requirements:
\begin{itemize}
    \item \textbf{(R1)} knowledge is evaluated with the model that uses it;
    \item \textbf{(R2)} $K$ and $K^-$ remain fixed while the model
    internalizes them.
\end{itemize}
A further consideration is efficiency. Knowledge evolution typically requires
multiple rounds before it produces stable knowledge with clear utility
gains. If model evolution waits for these rounds, training stalls; if it
uses knowledge too early, it internalizes weak guidance. The two
procedures below handle this trade-off differently. The sequential
procedure waits and satisfies both requirements exactly, whereas the
asynchronous procedure relaxes R1 during knowledge evolution so that
training uses knowledge as soon as it proves useful.

\subsection{Sequential Co-Evolution}
\label{app:sequential-co-evolution}

The sequential procedure (Algorithm~\ref{alg:co-evolution}) alternates
the two processes. Knowledge evolution runs with the model fixed, so every
candidate is replayed and validated with the model that will use it. It
continues until $M$ consecutive updates are rejected, indicating that
knowledge evolution no longer improves the knowledge under the current
model. Model evolution then performs $T$ updates with $K$ and $K^-$ fixed.
The rejected rounds share the same model and knowledge, and thus provide a
common baseline for constructing $K^-$ from consistent utility degradation
(\S~\ref{sec:internalization}). Afterwards, knowledge evolution continues
from the current $K$ with experience collected by the updated model.

This procedure satisfies R1 and R2 exactly, but each process waits for the
other: training is idle during the multiple rounds of knowledge evolution,
and knowledge evolution is idle during training.

\begin{algorithm}[htbp]
\caption{\method{}: Sequential Co-Evolution}
\label{alg:co-evolution}
\small
\begin{algorithmic}[1]
\Require Policy $\pi_\theta$, rejection threshold $M$, model-update
budget $T$, total budget.
\State Initialize $K\gets\varnothing$.
\While{total budget remains}
    \State \textcolor{knowledgeblue}{\textbf{[Knowledge Evolution]}}
    \Repeat
        \State Collect experience $\mathcal{E}$ using $\pi_\theta$
        conditioned on $K$.
        \State Generate knowledge candidates $\{K^{(j)}\}_{j=1}^{N}$.
        \Comment{Eq.~\ref{eq:knowledge-views}}
        \State Replay tasks and compute common gains.
        \Comment{Eq.~\ref{eq:common-gain}}
        \State Construct $\widetilde K$ from the selected tasks.
        \Comment{Eq.~\ref{eq:knowledge-update}}
        \State Accept $K\gets\widetilde K$ if held-out validation
        performance improves.
    \Until{$M$ consecutive updates are rejected or the total budget is
    exhausted}
    \State \textcolor{modelred}{\textbf{[Model Evolution]}}
    \State Construct $K^-$ from consistent utility degradation in the
    last $M$ rejected rounds.
    \For{up to $T$ updates within the remaining budget}
        \State Collect trajectories with and without $K$.
        \State Update $\theta$ with $\mathcal{L}_{\mathrm{SPLICE}}$.
        \Comment{Eq.~\ref{eq:joint-loss}}
    \EndFor
\EndWhile
\State \Return $\pi_\theta$ and $K$.
\end{algorithmic}
\end{algorithm}

\subsection{Asynchronous Co-Evolution}
\label{app:asynchronous-co-evolution}

The asynchronous procedure (Algorithm~\ref{alg:async-co-evolution}) runs
knowledge evolution and model evolution at the same time. Knowledge
evolution keeps improving its own copy of knowledge, $K_{\mathrm{evo}}$,
using experience generated during training, while model evolution trains
with $K$, the most recent $K_{\mathrm{evo}}$ that improves held-out validation
performance with the current model. Useful knowledge thus enters training without waiting
for knowledge evolution to stop, and training does not stall. Before
training starts, knowledge evolution runs several rounds with the initial
model to obtain initial knowledge.

\paragraph{Consistent references for comparison.}
Training experience is generated under whichever knowledge the current
stage uses. Knowledge evolution therefore takes trajectories generated
under the same knowledge $K'$ and measures replay gains relative to $K'$.
Updates supported by common gain are merged into the latest
$K_{\mathrm{evo}}$, which can differ from $K'$; knowledge evolution
therefore accumulates improvements instead of restarting with each batch.

\paragraph{Adopting knowledge for training.}
Model evolution proceeds in training stages. At the start of each stage,
the latest $K_{\mathrm{evo}}$ is compared with no knowledge on the held-out
validation set using the current model. It is adopted as $K$ only if it
improves validation performance; otherwise, the stage trains with GRPO
alone. Each stage that trains with $K$ first constructs $K^-$ from the
replay results of recent knowledge evolution rounds by selecting tasks with
consistent utility degradation using $-\Delta_{ij}$
(\S~\ref{sec:internalization}). Because knowledge evolution evaluates
candidates with earlier models, validating with the current model restores R1 at the time of
adoption. Within a stage, $K$ and $K^-$ remain fixed, which satisfies R2.

\paragraph{When to replace the training knowledge.}
The batch reward gap, i.e., the mean of
$\overline R_i^{K}-\overline R_i^{\varnothing}$ over a batch, measures how
much of the benefit of $K$ the model has not yet internalized.
Self-distillation is applied only when this gap exceeds
$\epsilon_{\mathrm{sd}}$, which extends the task-level weight $w_i$ to the
batch level; other batches use GRPO alone. A stage ends once the gap falls
to $\epsilon_{\mathrm{end}}$ or below after at least $T_{\min}$ updates,
indicating that $K$ has been largely internalized, or after $T_{\max}$
updates, so that newer knowledge can be adopted. A stage without knowledge
ends after $T_{\min}$ updates. $T_{\min}$ keeps each stage long enough for
stable training, while $T_{\max}$ limits how long training relies on
outdated knowledge.

\paragraph{Restarting knowledge evolution.}
If no knowledge is adopted for $L$ consecutive stages, which
indicates that the knowledge accumulated in $K_{\mathrm{evo}}$ no longer
helps the current model, knowledge evolution restarts from empty knowledge
while keeping the current model and the collected experience. At the end
of training, the latest $K_{\mathrm{evo}}$ is validated with the final model
in the same way to determine the returned knowledge.

\begin{algorithm}[htbp]
\caption{\method{}: Asynchronous Co-Evolution}
\label{alg:async-co-evolution}
\small
\begin{algorithmic}[1]
\Require Policy $\pi_\theta$; reward-gap thresholds $\epsilon_{\mathrm{sd}}$
and $\epsilon_{\mathrm{end}}$; stage budgets $T_{\min}\le T_{\max}$;
restart patience $L$; total budget.
\State Run knowledge evolution with the initial model to obtain
$K_{\mathrm{evo}}$.
\State Run the two processes below at the same time.
\Statex \textcolor{knowledgeblue}{\textbf{[Knowledge Evolution]}}
\While{training continues}
    \State Take training experience generated under the same knowledge.
    \State Run knowledge evolution on it
    (\S~\ref{sec:knowledge-evolution}) and update $K_{\mathrm{evo}}$.
    \State Restart from $K_{\mathrm{evo}}\gets\varnothing$ if requested.
\EndWhile
\Statex \textcolor{modelred}{\textbf{[Model Evolution]}}
\While{total budget remains}
    \State Adopt $K\gets K_{\mathrm{evo}}$ if it improves validation
    performance with the current model; otherwise $K\gets\varnothing$.
    \State If $K\neq\varnothing$, construct $K^-$ from consistent utility
    degradation in recent rounds before training.
    \State Request a restart after $L$ consecutive stages without adopted
    knowledge.
    \Repeat
        \State Collect trajectories with and without $K$; pass them to
        knowledge evolution.
        \State Update $\theta$ with $\mathcal{L}_{\mathrm{SPLICE}}$ if the
        reward gap exceeds $\epsilon_{\mathrm{sd}}$; otherwise with
        $\mathcal{L}_{\mathrm{GRPO}}$.
    \Until{the gap is at most $\epsilon_{\mathrm{end}}$ after $T_{\min}$
    updates, or $T_{\max}$ updates are reached}
\EndWhile
\State \Return $\pi_\theta$, and $K_{\mathrm{evo}}$ if it improves validation
performance with the final model.
\end{algorithmic}
\end{algorithm}

\subsection{Discussion: Relaxing R1}
\label{app:coevo-discussion}

In the asynchronous procedure, the model is updated while knowledge
evolution runs, so replay can use a newer model than the one that
generated the experience. For task $x_i$, the replay gain then decomposes
as $\Delta_{ij}=\Delta^{*}_{ij}+b_i$, where $\Delta^{*}_{ij}$ is the gain
of $K^{(j)}$ measured with the replay model alone, and $b_i$ is the reward
change caused by the model update under the knowledge that generated the
experience. Since all candidates for $x_i$ are replayed with the same
model, $b_i$ is shared across views, and
\begin{equation}
    \widehat C_i
    =
    \frac{1}{\binom{N}{2}}\sum_{j<\ell}
    \min(\Delta^{*}_{ij},\Delta^{*}_{i\ell})
    + b_i .
    \label{eq:app-shifted-common-gain}
\end{equation}
The inconsistency penalty is unchanged, so the comparison among evolution
views remains valid. The shift instead changes which tasks pass the
threshold $\widehat C_i>0$: model improvement ($b_i>0$) admits tasks whose
gains come from training rather than from knowledge, and model degradation
($b_i<0$) excludes useful evidence. Proposition~\ref{prop:view-stability}
likewise assumes a fixed model.

Two design choices limit this effect. Knowledge evolution uses recent
experience, which keeps the model change between experience collection
and replay small. More importantly, knowledge enters training only after
it improves validation performance over no knowledge with the current
model, so an update whose
apparent gain comes from the model update is not adopted unless it also
helps the current model. The remaining cost is efficiency: a shifted
threshold can select less informative tasks and produce updates that are
later rejected. Compared with the sequential procedure, the asynchronous
procedure accepts this cost in exchange for using knowledge without
waiting for knowledge evolution to finish.
\newpage

%% file: Sections/appendix_related.tex
\begingroup
\setlength{\parskip}{3pt}
\paragraph{Knowledge evolution for LLM agents.}
LLM agents can improve without parameter updates by turning experience
into external knowledge. Early methods reflect on failures and store textual
lessons: Reflexion keeps verbal self-reflections for subsequent attempts
\citep{shinn2023reflexion}, ExpeL extracts insights from successful and failed
trajectories across tasks \citep{zhao2024expel}, CLIN maintains a textual
memory updated across trials \citep{majumder2023clin}, and Agent-Pro reflects
at the level of the agent's policy \citep{zhang2024agent}. Another line
accumulates procedural knowledge, such as executable skill libraries
\citep{wang2023voyager}, workflows induced from past trajectories
\citep{wang2024agent}, and skills that are automatically discovered and refined
\citep{alzubi2026evoskill,yang2026skillopt,wei2026evoharnesscontexttoharnessskillcompilation}, including learnable memory
operations \citep{zhang2026memskill}.
Other methods maintain structured memories or contexts that evolve during task solving, including agentic memory
organization \citep{xu2026mem}, adaptive test-time memory
\citep{suzgun2026dynamic}, reasoning strategies distilled from successes and
failures \citep{ouyang2026reasoningbank}, and evolving playbooks
\citep{zhang2026agentic}. Training-Free GRPO further distills experiential
knowledge from group-relative comparisons of rollouts
\citep{cai2025training}. Related studies examine the reliability of skills
and agentic reasoning: model-generated skills do not reliably
improve performance \citep{li2026skillsbench,huang2026raw,dong2026agent},
unchecked accumulation degrades skill retrieval \citep{zhang2026library}, and
agents can fail to use experience and external evidence reliably
\citep{xiong2026memory,zhao2026papermindbenchmarkingagenticreasoning}. Some methods therefore retain only
updates with positive estimated gain \citep{tang2026memory}.
\method{} focuses on which
evidence should drive a knowledge update, using utility consistency across
multiple evolution views of the same experience.

\paragraph{Knowledge internalization and co-evolution.}
Internalizing knowledge into model parameters is closely related to context
distillation, which trains a model without a context to match its predictions
with the context \citep{snell2022learning}, and to on-policy distillation,
which supervises a student on its own generated trajectories
\citep{agarwal2024policy}. For LLM agents, recent work internalizes skills into
the policy through reinforcement learning and self-distillation
\citep{lu2026self,he2026siri}, progressively reduces reliance on skills
provided in context \citep{lu2026skill0}, or attributes success to skills by
contrasting rollouts with and without them \citep{lin2026skillc}. Other work
evolves knowledge together with the policy: skill banks are updated during
online reinforcement learning \citep{xia2026skillrl}, skill creation is coupled
with policy optimization \citep{he2026reskill}, experience is distilled into
principles that guide further policy learning \citep{wu2025evolver}, and a
single policy learns to select, use, and distill skills
\citep{shi2026skill1}. On-policy self-distillation, in which the model
conditioned on privileged context serves as its own teacher, has become a
dense training signal for LLMs \citep{zhao2026self}. Since privileged context
can distort the teacher's token distribution, recent variants modulate
token-level guidance by teacher--student disagreement \citep{li2026demopsd},
reverse the distillation direction \citep{shen2026anti}, or
contrast signals under correct and wrong hints \citep{pan2026rlcsd}. Negative
signals have likewise been shown to provide useful supervision in
reinforcement learning for LLMs \citep{zhu2026surprising}. \method{} derives
its negative reference from knowledge evolution itself, using knowledge
associated with consistent utility degradation, and combines it with
task-level weighting and a token-level gate during internalization.

\paragraph{Consistency regularization.}
Consistency regularization encourages a model to make consistent predictions
for the same input under perturbations. In semi-supervised learning, early
methods enforce agreement under stochastic transformations, dropout, and
ensembling \citep{bachman2014learning,sajjadi2016regularization,laine2016temporal},
and later methods improve the targets with weight-averaged teachers
\citep{tarvainen2017mean} or strong data augmentation
\citep{xie2020unsupervised}. The idea has been extended to supervised training
and fine-tuning through consistency between dropout sub-models
\citep{wu2021r} or under parameter perturbations \citep{ni2024pace}. These
methods assume that the correct output is invariant to the perturbation and
enforce consistency through a training loss. In knowledge evolution, this
assumption does not hold at the level of content, since different evolution
views are expected to produce different knowledge. \method{} therefore
requires consistency only in the utility gains of the resulting knowledge and
uses it as a criterion for selecting evidence.

\par
\endgroup
\newpage

%% file: Sections/appendix_results.tex
This section provides additional analyses of consistency regularization
and knowledge-model co-evolution, complementing Section~\ref{sec:analysis}.

\begin{figure}[H]
    \centering
    \includegraphics[width=\linewidth]{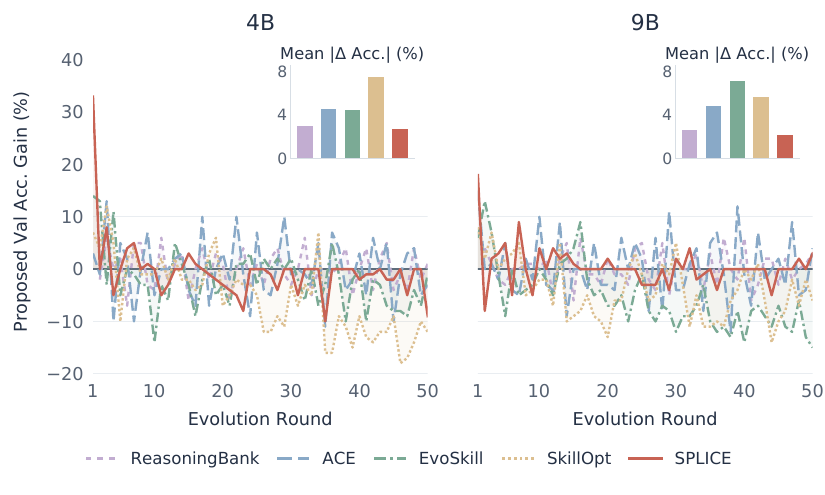}
    \caption{\textbf{Validation gains of proposed updates on WebShop.}
    Validation Acc. of the proposed update minus that of the current
    knowledge (in points) at each evolution round, including rejected updates.
    Insets show the mean absolute change.}
    \label{fig:consistency-gains}
\end{figure}

\paragraph{Stability of proposed updates.}
Figure~\ref{fig:consistency-gains} reports the change in validation Acc.
caused by the proposed update at each of the first 50 evolution rounds on
WebShop, including updates that are later rejected. The baselines frequently
propose updates that lower validation Acc. by 10 points or more, especially
SkillOpt and EvoSkill. In contrast, the updates proposed by \method{} mostly
stay close to zero or positive and have the smallest mean absolute change for
both model sizes. This is consistent with the design of common gain, which
discounts evidence whose gains appear under only some evolution views and thus
avoids updates that change the knowledge in unstable ways.

\begin{figure}[H]
    \centering
    \includegraphics[width=0.85\linewidth]{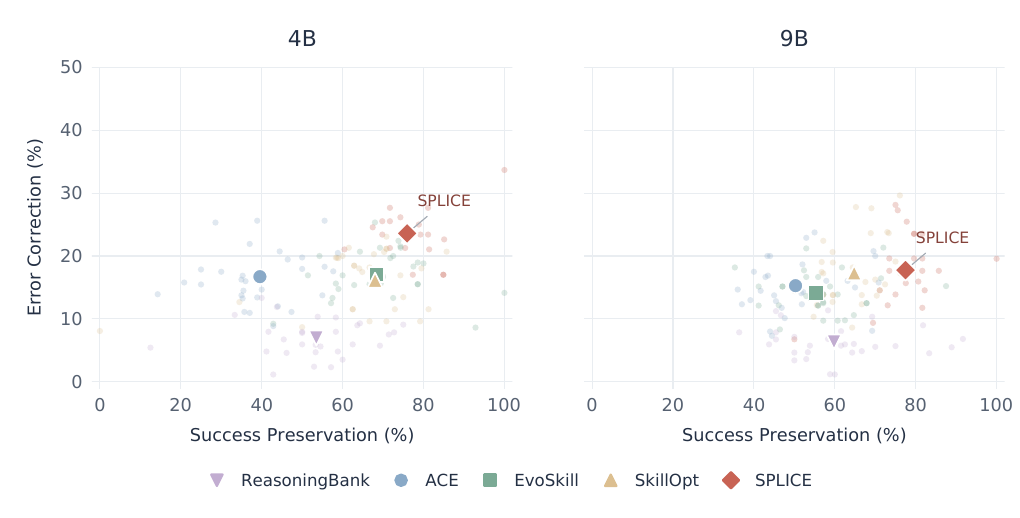}
    \caption{\textbf{Error correction and success preservation on WebShop.}
        Percentages of tasks failed under the current knowledge that a
        proposed update solves, and of solved tasks that remain solved.
        Small points denote proposed updates in the first 30 rounds; large
        markers show pooled rates.}
    \label{fig:consistency-repair-retention}
\end{figure}

\paragraph{Error correction and success preservation.}
To examine how proposed updates affect individual tasks, we compare each
proposed update with the current knowledge on the validation tasks. Error
correction is the percentage of tasks failed under the current knowledge that
the proposed update solves, and success preservation is the percentage of
tasks solved under the current knowledge that remain solved.
Figure~\ref{fig:consistency-repair-retention} reports both metrics for the
proposed updates in the first 30 evolution rounds on WebShop, excluding rounds
without an update. \method{} attains the highest pooled rates for both
metrics: 23.60\% error correction and 75.94\% success preservation with
Qwen3.5-4B, and 17.75\% and 77.47\% with Qwen3.5-9B, whereas the best baseline
preservation rates are 68.40\% and 64.79\%. Updates selected by common gain
thus correct more errors while breaking fewer existing successes.

\begin{figure}[H]
    \centering
    \includegraphics[width=\linewidth]{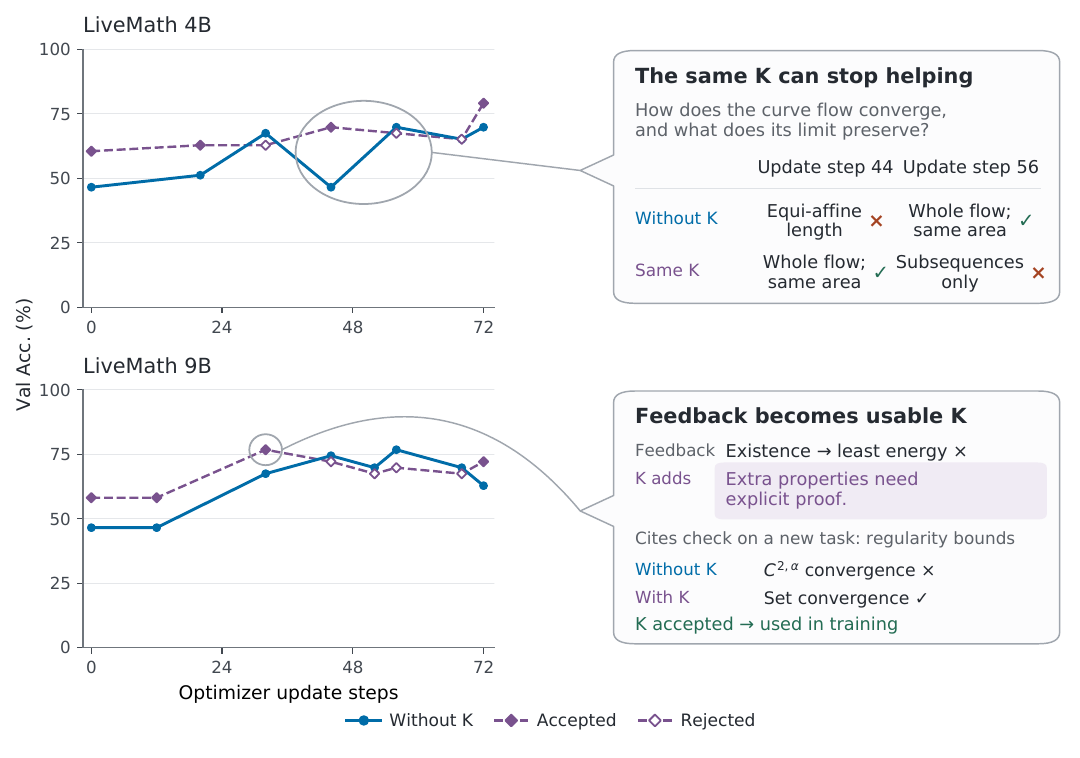}
    \caption{\textbf{Training dynamics and knowledge use on LiveMath.}
        Left: validation Acc. over optimizer update steps without knowledge and
        with the latest knowledge, which is adopted (filled) or rejected
        (open) by validation with the current model. Right: the same
        knowledge helps the 4B model at update step 44, but leads to an error at
        update step 56 when the model already answers correctly without it;
        the 9B model uses a check learned from execution feedback to
        correct an unsupported convergence claim, and the knowledge is
        accepted for subsequent training.}
    \label{fig:training-coevolution}
\end{figure}

\paragraph{Training dynamics of co-evolution.}
Figure~\ref{fig:training-coevolution} (left) tracks validation Acc. on
LiveMath during co-evolution, evaluating the model without knowledge and with
the latest knowledge at each optimizer update step. Early in training, the knowledge
improves the model substantially. As the model improves without knowledge,
this benefit shrinks, and some knowledge updates no longer improve the current
model and are rejected. The examples on the right illustrate both directions
of this interplay. For the 4B model, the same knowledge helps the model answer
correctly at update step 44, but by update step 56 the model answers correctly without
it, whereas adding the knowledge leads to an error. For the 9B model, a check
learned from execution feedback corrects an unsupported convergence claim on
another task: at update step 32, the model answers incorrectly without knowledge
and correctly with it, and the knowledge is accepted for subsequent training.
These dynamics motivate validating knowledge with the current model before
using it for training (Appendix~\ref{app:co-evolution}).

%% file: Sections/appendix_cases.tex
\begingroup
\input{Sections/appendix_case_style}
\setlength{\parindent}{0pt}
\setlength{\parskip}{0pt}
\color{csink}

\subsection{Evolved Knowledge}
\label{app:evolved-knowledge-case}
\begin{csbanner}
Turning a search loop into a knowledge update
\end{csbanner}

\cstask{Find a loose-fitting women's top in d1-hot pink, size large, for under 50 dollars.}{WebShop\;\textperiodcentered\;Qwen3.5-9B}

\cssection{01}{Original knowledge}{Related excerpts from $K$}
\begin{csquote}{cspale}{csblue}
Required color/size confirmed absent from product page $\to$ \textbf{Go back to search and try a different product}
\par\vspace{3pt}
If 2--3 different search queries all return the same result set (same items appear regardless of how you reword the query), the index does not contain a better match. \textbf{Stop searching.}
\end{csquote}

\cssection{02}{Task execution}{}
\noindent

\begin{minipage}[t]{0.487\linewidth}
\begin{cstrace}{With existing $K$}{15}
\begin{minipage}[t][154pt][t]{\linewidth}
\fontsize{8.1}{10}\selectfont
\csaction{01}{search[...]}{white}{csink}
\csaction{02}{click[Product A]}{csredbg}{csred}
\csaction{03}{click[back to search]}{white}{csink}
\csaction{04}{search[...]}{white}{csink}
\csaction{05}{click[back to search]}{white}{csink}
\csaction{06}{search[...]}{white}{csink}
\csaction{07}{[no valid action]}{white}{csmuted}
\csaction{08}{click[Product A]}{csredbg}{csred}
\csaction{09}{click[back to search]}{white}{csink}
\csaction{10}{search[...]}{white}{csink}
\csaction{11}{click[Product A]}{csredbg}{csred}
\csaction{12}{click[back to search]}{white}{csink}
\csaction{13}{search[...]}{white}{csink}
\csaction{14}{click[Product A]}{csredbg}{csred}
\csaction{15}{click[a1-wine red]}{white}{csink}
\end{minipage}
\csoutcome{0.00}{csred}{Product A lacks the requested color.\\Four visits; no purchase.}
\end{cstrace}
\end{minipage}
\hfill
\begin{minipage}[t]{0.487\linewidth}
\begin{cstrace}{With candidate 2}{15}
\begin{minipage}[t][154pt][t]{\linewidth}
\fontsize{8.1}{10}\selectfont
\csaction{01}{search[...]}{white}{csink}
\csaction{02}{[no valid action]}{white}{csmuted}
\csaction{03}{click[Product C]}{white}{csink}
\csaction{04}{click[back to search]}{white}{csink}
\csaction{05}{search[...]}{white}{csink}
\csaction{06}{click[Product A]}{csredbg}{csred}
\csaction{07}{click[back to search]}{white}{csink}
\csaction{08}{search[...]}{white}{csink}
\csaction{09}{click[Product B]}{csgreenbg}{csgreen}
\csaction{10}{click[d1-hot pink]}{white}{csink}
\csaction{11}{click[large]}{white}{csink}
\csaction{12}{[no valid action]}{white}{csmuted}
\csaction{13}{click[d1-hot pink]}{white}{csink}
\csaction{14}{click[large]}{white}{csink}
\csaction{15}{click[buy now]}{csgreenbg}{csgreen}
\end{minipage}
\csoutcome{1.00}{csgreen}{Product B offers the requested color.\\Color and size selected; purchase succeeds.}
\end{cstrace}
\end{minipage}

\cssection{03}{Candidate knowledge across views}{Common gain = 1.00}
\cscandidate{Candidate 1}{1.00}{8}{csgreen}{%
Coded color absent, visited same product 2+ times AND searches returning same results $\to$ \textbf{Stop searching. Select closest color by hue family.} Proceed to size + buy now.}
\cscandidate{Candidate 2}{1.00}{15}{csgreen}{%
The wrong choice is to \textbf{loop back to the same product page repeatedly} [\ldots] Pick a \textbf{different candidate} from the results page}
\cscandidate{Candidate 3}{1.00}{13}{csgreen}{%
\textbf{The saturation rule overrides the color-absent rule.} [\ldots] A different product on the current results page is more likely to carry the exact color $\to$ \textbf{click that other product first before approximating.}}
\csnote{Different candidate instructions all raise replay reward from 0 to 1.}

\cssection{04}{Consolidated knowledge}{Abridged}
\begin{csquote}{csgreenbg}{csgreen}

\csrule{1}{If the exact color is absent, inspect a different product from the same results page.}
\csrule{2}{Do not revisit a product after confirming that its required color is absent.}
\csrule{3}{Try another product before using a closest-color fallback.}
\end{csquote}
\csfooter{Accepted}{csgreen}{Validation Acc. $45\%\to49\%$}{Retained in final $K$}

\clearpage
\subsection{Noisy Knowledge}
\label{app:noisy-knowledge-case}
\begin{csbanner}
Constructing noisy knowledge from degrading candidates
\end{csbanner}
\cstask{Find a machine-washable shirt in dark heather, men's fit, size 2t, for under 40 dollars.}{WebShop\;\textperiodcentered\;Qwen3.5-9B}

\cssection{01}{Original knowledge}{Related excerpts from $K$}
\begin{csquote}{cspale}{csblue}
After clicking all N required options, immediately click \texttt{buy now} on the very next turn.
\par\vspace{3pt}
Use your recent action history (last 2 actions visible) to determine which required options have already been clicked this visit. \textbf{Once all required options appear in your recent history}, click \texttt{buy now} immediately---do not re-click any option.
\end{csquote}

\cssection{02}{Task execution}{}
\noindent

\begin{minipage}[t]{0.487\linewidth}
\begin{cstrace}{With existing $K$}{13}
\begin{minipage}[t][154pt][t]{\linewidth}
\fontsize{8.1}{10}\selectfont
\csaction{01}{search[...]}{white}{csink}
\csaction{02}{click[Product A]}{white}{csink}
\csaction{03}{click[dark heather]}{csgreenbg}{csgreen}
\csaction{04}{click[men]}{csgreenbg}{csgreen}
\csaction{05}{click[2t]}{csgreenbg}{csgreen}
\csaction{06}{click[dark heather]}{white}{csink}
\csaction{07}{click[men]}{white}{csink}
\csaction{08}{click[2t]}{white}{csink}
\csaction{09}{click[dark heather]}{white}{csink}
\csaction{10}{click[men]}{white}{csink}
\csaction{11}{click[2t]}{white}{csink}
\csaction{12}{click[dark heather]}{white}{csink}
\csaction{13}{click[buy now]}{csgreenbg}{csgreen}
\end{minipage}
\csoutcome{1.00}{csgreen}{All three options selected.\\Purchase succeeds.}
\end{cstrace}
\end{minipage}
\hfill
\begin{minipage}[t]{0.487\linewidth}
\begin{cstrace}{With candidate 3}{15}
\begin{minipage}[t][154pt][t]{\linewidth}
\fontsize{8.1}{10}\selectfont
\csaction{01}{search[...]}{white}{csink}
\csaction{02}{click[Product B]}{white}{csink}
\csaction{03}{click[back to search]}{white}{csink}
\csaction{04}{[no valid action]}{csredbg}{csmuted}
\csaction{05}{[no valid action]}{csredbg}{csmuted}
\csaction{06}{[no valid action]}{csredbg}{csmuted}
\csaction{07}{search[...]}{white}{csink}
\csaction{08}{click[Product B]}{white}{csink}
\csaction{09}{search[...]}{white}{csink}
\csaction{10}{search[...]}{white}{csink}
\csaction{11}{[no valid action]}{csredbg}{csmuted}
\csaction{12}{[no valid action]}{csredbg}{csmuted}
\csaction{13}{[no valid action]}{csredbg}{csmuted}
\csaction{14}{[no valid action]}{csredbg}{csmuted}
\csaction{15}{[no valid action]}{csredbg}{csmuted}
\end{minipage}
\csoutcome{0.00}{csred}{Stalls during search.\\Budget exhausted; no purchase.}
\end{cstrace}
\end{minipage}

\cssection{03}{Candidate knowledge across views}{Common gain on $-\Delta$ = 1.00}
\cscandidate{Candidate 1}{0.00}{15}{csred}{%
After you have clicked \textbf{N options across N consecutive turns}, click \texttt{buy now} on the very next turn---\textbf{regardless of what the history window shows}.}
\cscandidate{Candidate 2}{0.00}{15}{csred}{%
\textbf{Count each option click} you make on this page, starting from 1. The moment your count equals the number of required options, click \texttt{buy now} on the very next action.}
\cscandidate{Candidate 3}{0.00}{15}{csred}{%
When the goal specifies exactly 3 required options AND your recent history shows exactly 2 of them: \textbf{all 3 are already done}---the third fell out of the 2-turn window. Click \texttt{buy now} immediately.}

\cssection{04}{Noisy knowledge}{$K^-$ from candidate 3}
\begin{csquote}{csredbg}{csred}
When the goal specifies exactly 3 required options and recent history shows exactly 2 of them, \textbf{treat all 3 as done} (the third fell out of the 2-turn window) and click \texttt{buy now} immediately.
\end{csquote}
\csnote{Two visible selections do not establish that the third option was selected.}
\endgroup

%% file: Sections/appendix_case_style.tex
\definecolor{csink}{HTML}{24384B}
\definecolor{csmuted}{HTML}{667487}
\definecolor{csline}{HTML}{DCE3E9}
\definecolor{csblue}{HTML}{577F9F}
\definecolor{csgreen}{HTML}{568D76}
\definecolor{csred}{HTML}{C86354}
\definecolor{cspale}{HTML}{F3F5F7}
\definecolor{csgreenbg}{HTML}{EAF3EF}
\definecolor{csredbg}{HTML}{FAEFEC}

\newcommand{\csbody}{\fontsize{8.6}{10.4}\selectfont}
\newcommand{\csmeta}{\sffamily\fontsize{7.3}{8.7}\selectfont\color{csmuted}}
\tcbset{csbase/.style={
    enhanced, boxrule=0.4pt, sharp corners, boxsep=0pt,
    colback=white, colframe=csline,
    left=5pt, right=5pt, top=4pt, bottom=4pt,
    before skip=0pt, after skip=0pt, fontupper=\csbody
}}
\newtcolorbox{csbanner}{
    csbase, colback=csink, colframe=csink,
    left=7pt, right=7pt, top=5pt, bottom=5pt,
    fontupper=\sffamily\bfseries\fontsize{9.6}{11.3}\selectfont,
    colupper=white
}
\newtcolorbox{csquote}[2]{
    csbase, colback=#1, colframe=#1, boxrule=0pt,
    borderline west={1.5pt}{0pt}{#2},
    left=7pt, top=4pt, bottom=4pt
}
\newtcolorbox{cstrace}[2]{
    csbase, left=4pt, right=4pt, top=4pt, bottom=5pt,
    colbacktitle=cspale, coltitle=csink,
    fonttitle=\sffamily\bfseries\fontsize{8.5}{10}\selectfont,
    toptitle=4pt, bottomtitle=4pt,
    title={#1\hfill{\normalfont\csmeta #2 turns}}
}
\newcommand{\cssection}[3]{%
    \par\vspace{5pt}\noindent
    {\sffamily\bfseries\fontsize{8.5}{10}\selectfont
     \textcolor{csblue}{#1}\hspace{5pt}#2\hfill{\normalfont\csmeta #3}}%
    \par\nobreak\vspace{3pt}%
}
\newcommand{\cstask}[2]{%
    \vspace{4pt}
    \begin{tcolorbox}[csbase,colback=cspale,colframe=cspale,boxrule=0pt]
        {\sffamily\bfseries\fontsize{7.3}{8.7}\selectfont
         \textcolor{csmuted}{TASK (ABRIDGED)}}\hspace{6pt}#1\par\vspace{4pt}
        {\csmeta #2}
    \end{tcolorbox}%
}
\newcommand{\csaction}[4]{%
    \par\nointerlineskip
    \begingroup\setlength{\fboxsep}{0pt}%
    \hbox{\colorbox{#3}{\makebox[\linewidth][l]{%
        \rule[-3pt]{0pt}{10pt}%
        \makebox[17pt][l]{{\sffamily\fontsize{7.1}{10}\selectfont\color{csmuted}#1}}%
        {\ttfamily\color{#4}#2}}}}%
    \endgroup
}
\newcommand{\csoutcome}[3]{%
    \par\vspace{2pt}{\color{csline}\hrule}\vspace{4pt}
    {\sffamily\bfseries\fontsize{9}{10.5}\selectfont
     \textcolor{#2}{Reward #1}}\par\vspace{3pt}
    {\csmeta #3}%
}
\newcommand{\csnote}[1]{%
    \par\vspace{3pt}\noindent{\csmeta #1}\par
}
\newcommand{\cscandidate}[5]{%
    \begin{tcolorbox}[csbase,colback=cspale,left=5pt,right=5pt,top=4pt,bottom=4pt]
        \begin{minipage}[t]{0.19\linewidth}\vspace{0pt}
            {\sffamily\bfseries\fontsize{8}{9.4}\selectfont #1}\par\vspace{2pt}
            {\sffamily\fontsize{7.5}{9}\selectfont\textcolor{#4}{Reward #2}}\par
            {\csmeta #3 turns}
        \end{minipage}\hfill
        \begin{minipage}[t]{0.78\linewidth}\vspace{0pt}
            \csbody #5
        \end{minipage}
    \end{tcolorbox}
    \par\vspace{2pt}%
}
\newcommand{\csrule}[2]{%
    \noindent\hangindent=12pt\hangafter=1
    \makebox[12pt][l]{\sffamily\bfseries\textcolor{csgreen}{#1}}#2\par
    \vspace{2pt}%
}
\newcommand{\csfooter}[4]{%
    \par\vspace{6pt}{\color{csline}\hrule}\vspace{5pt}\noindent
    {\sffamily\bfseries\fontsize{8.5}{10}\selectfont\textcolor{#2}{#1}}%
    \hspace{8pt}{\csbody #3}\hfill{\csmeta #4}\par
}